\newcommand{\ba}{\boldsymbol{a}}
\newcommand{\bb}{\boldsymbol{b}}
\newcommand{\be}{\boldsymbol{e}}
\newcommand{\bp}{\boldsymbol{p}}
\newcommand{\bg}{\boldsymbol{g}}
\newcommand{\bx}{\boldsymbol{x}}
\newcommand{\bu}{\boldsymbol{u}}
\newcommand{\by}{\boldsymbol{y}}
\newcommand{\bz}{\boldsymbol{z}}
\newcommand{\bw}{\boldsymbol{w}}
\newcommand{\argmin}{\mathop{\mathrm{argmin}}}
\newcommand{\interior}{\mathop{\mathrm{int}}}
\newcommand{\dom}{\mathop{\mathrm{dom}}}
\newcommand{\field}[1]{\mathbb{#1}}
\newcommand{\R}{\field{R}}
\newtheorem{theorem}{Theorem}[section]
\newtheorem{lemma}{Lemma}[section]
\newtheorem{corollary}{Corollary}[theorem]
\crefname{prop}{proposition}{propositions}
\crefname{lemma}{lemma}{lemmas}
\title{A Closer Look at Temporal Variability in Dynamic Online Learning}
\author{
  Nicolò Campolongo\\
  Università di Milano\\
  \texttt{nicolo.campolongo@unimi.it} \\
   \And
  Francesco Orabona \\
  Boston University\\
  \texttt{francesco@orabona.com} \\
}
\begin{document}

\twocolumn[\maketitle]

\begin{abstract}
    This work focuses on the setting of dynamic regret in the context of online learning with full information. In particular, we analyze regret bounds with respect to the temporal variability of the loss functions. By assuming that the sequence of loss functions does not vary much with  time, we show that it is possible to incur improved regret bounds compared to existing results. The key to our approach is to use the loss function (and not its gradient) during the optimization process. Building on recent advances in the analysis of Implicit algorithms, we propose an adaptation of the Implicit version of Online Mirror Descent to the dynamic setting. Our proposed algorithm is adaptive not only to the temporal variability of the loss functions, but also to the path length of the sequence of comparators when an upper bound is known. Furthermore, our analysis reveals that our results are tight and cannot be improved without further assumptions. Next, we show how our algorithm can be applied to the setting of learning with expert advice or to settings with composite loss functions. Finally, when an upper bound to the path-length is not fixed beforehand we show how to combine a greedy strategy with existing strongly-adaptive algorithms to compete optimally against different sequences of comparators simultaneously.
\end{abstract}

\section{Introduction}

Online learning is a powerful tool in modeling many practical scenarios. Furthermore, in recent years it has led to advancements in various areas of machine learning in general, both practically and theoretically. 
Formally, given a convex set $\mathcal{V} \subseteq \mathbb{R}^d$, a time horizon $T$ and a sequence of cost functions $\ell_1, \ldots, \ell_t $, in the online learning setting the goal is to design algorithms such that for any comparator model $\bu \in \mathcal{V}$ the regret is minimized,
\begin{equation*}
    R_T(\bu) \triangleq \sum_{t=1}^T \ell_t(\bx_t) - \sum_{t=1}^T \ell_t(\bu),
\end{equation*}
where $\bx_t$ is the output of the algorithm at time $t$.
In particular, the objective is to have algorithms whose regret can be provably upper bounded by a quantity which grows sublinearly in $T$.

While the static regret is a well-studied objective and many algorithms have a sublinear regret upper bound, sometimes competing with the best comparator is not meaningful. Indeed, there are situations where the environment is not stationary. In this case, rather than comparing the performance of an algorithm against a single fixed model, it is preferable to compete against a ``moving'' target, i.e., a sequence of different comparators. In this work, we focus on online learning in the dynamic setting, considering the \emph{full-information} feedback, where in every round the loss function is revealed. 

To model dynamic environments, stronger notions of regret are used. In particular, we consider the \emph{general dynamic regret} \citep{zinkevich2003online,hall2013dynamical} against the sequence $\bu_{1:T} \triangleq (\bu_1, \ldots, \bu_T)$ as
\begin{equation} \label{eq:dynamic_regret}
    R_T(\bu_{1:T}) \triangleq \sum_{t=1}^T \ell_t(\bx_t) - \sum_{t=1}^T \ell_t(\bu_t)~.
\end{equation} 
It can be shown that it is impossible to achieve sublinear dynamic regret in the worst-case. However, if one puts some restrictions on the sequence $\bu_{1:T}$ and makes some regularities assumptions, then \cref{eq:dynamic_regret} can be sublinear in $T$. There are various measures which can be used to model the regularity of the environment. A natural measure of non-stationarity introduced in \citet{zinkevich2003online} is the \emph{path-length}\footnote{One could also consider other versions of path-length, such as its squared version \citep{yang2016tracking}.} of the sequence $\bu_{1:T}$, which we denote by
\begin{equation} \label{eq:path_length}
    C_T(\bu_{1:T}) \triangleq \sum_{t=2}^T \| \bu_t - \bu_{t-1} \|~.
\end{equation}
Another measure of non-stationarity is given by the \emph{temporal variability} of the loss functions~\citep{besbes2015non}. Formally, let $\ell_{1:T}$ be the shorthand for $(\ell_1, \ldots, \ell_T)$, the temporal variability of a sequence $\ell_{1:T}$ is defined as 
\begin{equation} \label{eq:temporal_variab}
    V_T(\ell_{1:T}) \triangleq \sum_{t=2}^T \max_{\bx \in \mathcal{V}} | \ell_t(\bx) - \ell_{t-1}(\bx) |~.
\end{equation}
In the remaining we will use the shorthands $C_T$ for $C_T(\bu_{1:T})$ and $V_T$ for $V_T(\ell_{1:T})$ when the context is clear.
A particular case of dynamic regret is the so-called \emph{restricted} setting~\citep{besbes2015non,jadbabaie2015online,yang2016tracking}. In this setting, the sequence of comparators is given by the local minimizers of the loss functions, i.e., $\bu_{1:T}^* := (\bu_1^*, \ldots, \bu_T^*)$, where $\bu_t^* = \argmin_{\bx \in \mathcal{V}} \ell_t(\bx)$. 

Most recent developments in online learning have been driven by the use of two paradigms: \emph{Online Mirror Descent} (OMD) and \emph{Follow The Regularized Leader} (FTRL) (see the surveys \citet{shalev2012online,Orabona19}). Both of them usually achieve the same regret bounds thanks to the linearization trick: given the convexity of the loss functions one can exploit the fact that $\ell_t(\bx_t) - \ell_t(\bu) \leq \langle \bg_t, \bx_t - \bu \rangle$, where $\bg_t \in \partial \ell_t(\bx_t)$ is a subgradient of the loss function. One can therefore shift her goal to minimize this new objective over time. 
On the other hand, we choose to not use subgradients in the optimization process but the loss function directly. 
We will show that this is the key factor in order to obtain dynamic regret bounds depending on the temporal variability $V_T$.

\paragraph{Contributions.}
The main results of this paper are summarized below:
\begin{itemize}
    \item In \cref{sec:temp_var}, we show that there exists a simple strategy which achieves an upper bound of $\mathcal{O}(V_T)$ on the dynamic regret. We also provide a lower bound which shows that this regret bound is tight. Next, we show when this strategy fails and why we need different algorithms.
    \item In \cref{sec:dynamic_iomd}, using recent advances in the analysis of implicit updates in online learning we design an algorithm which is adaptive to both $C_T$ and $V_T$. Using an adaptation of OMD to the implicit case, we will provide an algorithm which incurs a dynamic regret bound of $\mathcal{O}(V_T, \sqrt{T(1+\tau)})$, for all sequence of comparators whose path-length $C_T$ is upper bounded by $\tau$.
    \item Finally, when the complexity of the class of comparators is not fixed in advance in terms of path-length (i.e., an upper bound $C_T \le \tau$ is not fixed beforehand), in \cref{sec:combiner} we show how to combine the strategy from \cref{sec:temp_var} with an existing algorithm and get the optimal bound of $\tilde{\mathcal{O}}(\min\{V_T, \sqrt{T(1 + C_T(\bu_{1:T}))}\})$\footnote{The $\tilde{\mathcal{O}}$ notation hides poly-logarithmic terms.} for any possible sequence $\bu_{1:T}$. 
\end{itemize}

\section{Related work}

In this section, we are going to review the two lines of work most related to ours: algorithms designed for non-stationary environments and implicit updates in online learning. We recap existing results and highlight both similarities and differences compared to our results.

\vspace{-0.1cm}

\paragraph{Path-length.} The notion of dynamic regret was first introduced in the seminal work of \citet{zinkevich2003online}, which proved that \emph{Online Gradient Descent} incurs a regret bound of $\mathcal{O}(\sqrt{T}(1+ C_T))$. This result was later extended by \citet{hall2013dynamical} who considered a modified (and possibly richer) definition of path-length. A lower bound of $\Omega( \sqrt{T(1 + C_T)})$\footnote{To avoid clutter, w.l.o.g. we supress parameters other than $T$ and $C_T$ in the asymptotic notation.} in terms of path-length is shown in \citet[Theorem 2]{zhang2018adaptive}, who also provide an algorithm which matches it. 

\vspace{-0.1cm}

\paragraph{Temporal Variability.} \citet{besbes2015non} provided an analysis of restarted gradient descent in the setting of stochastic optimization with noisy gradients which incurs $\mathcal{O}(T^{2/3} (V_T' + 1)^{1/3} ) $, where $V_T'$ is an upper bound on $V_T$ known in advance. \citet{jadbabaie2015online} gave an algorithm achieving a restricted dynamic regret of $\tilde{\mathcal{O}}( \sqrt{G_T} + \min(\sqrt{(G_T+1)C_T}, ((G_T+1)T)^{1/3} (V_T+1)^{2/3}))$, where $G_T = \sum_{t=1}^T \| \nabla f_t(\bx_t) - \bp_t \|_\star^2$ and $\bp_1, \ldots, \bp_T$ is a predictable sequence computable at the start of round $t$. Importantly, this bound is obtained without prior knowledge of $G_T$, $C_T$ and $V_T$ but under the assumption that all of them can be \emph{observable}. If one limits the algorithm to not use predictable sequences, then the bound reduces to $\tilde{\mathcal{O}}(\sqrt{T} + \min\{\sqrt{T(1 + C_T)}, T^{1/3} (V_T+1)^{2/3})\}$. In \cref{sec:dynamic_iomd}, we design an algorithm similar in spirit to the one from \citet{jadbabaie2015online}, which incurs an improved regret bound of $\min\{\sqrt{T(1+C_T)}, V_T \}$ when $C_T$ is fixed in advance or can be observed.

\vspace{-0.05cm}

\paragraph{Adaptive Regret.} A parallel line of work on non-stationary environments involves the study of the weakly and strongly-adaptive regret~\citep{hazan2007adaptive,daniely2015strongly}, which aims to minimize the static regret over any possible (sub)interval over the time horizon $T$. Importantly, it has been shown that strongly-adaptive regret bounds imply dynamic regret bounds. Recently, \citet{cutkosky2020parameter} provided a strongly-adaptive algorithm that achieves the optimal dynamic regret bound in terms of path-length, for any sequence of comparators.
On the other hand, in \citet{zhang2018dynamic} it is shown that dynamic regret of the strongly adaptive algorithm given in \citet{jun2017improved} is $\tilde{\mathcal{O}}(T^{2/3} (V_T+1)^{1/3})$.
To summarize, using existing strongly-adaptive algorithms, a dynamic regret bound of $ \tilde{\mathcal{O}}(\min\{ \sqrt{T(1+C_T)}, T^{2/3} (V_T+1)^{1/3}\}) $ for any sequence $\bu_{1:T}$ can be achieved, without knowing its path-length in advance. In \cref{sec:combiner}, we show instead how to improve this bound to $\mathcal{O}(\min\{\sqrt{T(1+C_T)}, V_T \}$.

\vspace{-0.1cm}

\paragraph{Implicit Algorithms.} Implicit algorithms are known in the optimization literature as proximal methods \citep{parikh2014proximal} and can be traced back to the work of \citet{moreau1965proximite}.  In the online learning community, they have been introduced in \citet{kivinen1997exponentiated}. 
In a recent work \citet{campolongo2020temporal} showed that implicit updates can outperform their linearized counterparts when the temporal variability is low, in the static setting.
The dynamic regret for proximal algorithms in the online setting has been also studied in the case of strongly convex losses in \citet{dixit2019online} and for composite losses in \citet{ajalloeian2020inexact}, but both these works assume different notions of feedback from ours.

\section{Definitions}

For a function $f:\R^d \rightarrow (-\infty, +\infty]$, we define a \emph{subgradient} of $f$ in $\bx \in \R^d$ as a vector $\bg \in \R^d$ that satisfies $f(\by)\geq f(\bx) + \langle \bg, \by-\bx\rangle, \ \forall \by \in \R^d$. 
We denote the set of subgradients of $f$ in $\bx$ by $\partial f(\bx)$. We denote by $\be_i$ the standard basis vectors, for $i=1,\dots,d$. We denote the expected value of a random variable $\bx$ by $\mathbb{E}[\bx]$ and the indicator function of the event $A$ by $\mathbbm{1}\{A\}$.
We denote the \emph{dual norm} of $\|\cdot\|$ by $\|\cdot\|_\star$.
A proper function $f : \R^d \rightarrow (-\infty, +\infty]$ is \emph{$\mu$-strongly convex} over a convex set $V \subseteq \interior \dom f$ w.r.t. $\|\cdot\|$ if $\forall \bx, \by \in V$ and $\bg \in \partial f(\bx)$, we have $f(\by) \geq f(\bx) + \langle \bg , \by - \bx \rangle + \frac{\mu}{2} \| \bx - \by \|^2$.
Let $\psi: X \rightarrow \R$ be strictly convex and continuously differentiable on $\interior X$. The \emph{Bregman Divergence} w.r.t. $\psi$ is $B_\psi : X \times \interior X \rightarrow \R_+$ defined as $
B_\psi(\bx, \by) = \psi(\bx) - \psi(\by) - \langle \nabla \psi(\by), \bx - \by\rangle$.
We assume that $\psi$ is strongly convex w.r.t. a norm $\|\cdot\|$ in $\interior X$. We also assume w.l.o.g. the strong convexity constant to be~1, which implies
\begin{equation} \label{eq:bregman_strongly_convex}
B_\psi(\bx,\by)\geq \frac{1}{2}\|\bx-\by\|^2, \quad \forall \bx \in X , \by \in \interior X~.
\end{equation}

\section{A greedy strategy and its limitations}
\label{sec:temp_var}

In this section, we presents a greedy strategy that achieves a dynamic regret bound in terms of temporal variability of the loss functions of $\mathcal{O}(V_T)$. Furthermore, we show that this bound is optimal proving a matching lower bound. Then, we discuss why this strategy might be harmful if the environment is stationary.

The strategy adopted to achieve a bound in terms of temporal variability $V_T$ is depicted in \cref{algo:greedy}. In each round the algorithm plays the minimizer of the observed loss function in the previous round. Despite being reported in \citet{jadbabaie2015online}, we could not find a formal proof of the regret bound of this algorithm. Hence, for completeness we next state a theorem which provides a regret bound to \cref{algo:greedy}.

\begin{algorithm}[t]
\caption{Greedy optimizer}
\label{algo:greedy}
\begin{algorithmic}[1]
{
\REQUIRE{Non-empty closed convex set $\mathcal{V} \subset X \subset \mathbb{R}^d $, $\bx_1 \in \mathcal{V}$}
\FOR{$t=1,\dots,T$}
\STATE{Output $\bx_t \in \mathcal{V}$}
\STATE{Receive $\ell_t: \mathbb{R}^d \rightarrow \R$ and pay $\ell_t(\bx_t)$}
\STATE{Update $\bx_{t+1} = \arg\min_{\bx \in \mathcal{V}} \ell_t(\bx)$}
\ENDFOR
}
\end{algorithmic}
\end{algorithm}

\begin{theorem} \label{thm:greedy_regret}
Let $\mathcal{V} \subset X \subset \mathbb{R}^d $ be non-empty closed convex sets. The regret of \cref{algo:greedy} against any sequence $\bu_{1:T}$ with $\bu_t \in \mathcal{V}$ for all $t$ is bounded as $R_T(\bu_{1:T}) \le \max(V_T, \mathcal{O}(1))$.
\end{theorem}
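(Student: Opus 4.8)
The plan is to bound the instantaneous regret $\ell_t(\bx_t)-\ell_t(\bu_t)$ round by round, exploiting that for $t\ge 2$ the algorithm plays exactly the minimizer of the previous loss, $\bx_t=\bu^*_{t-1}$ with $\bu^*_{t-1}\triangleq\argmin_{\bx\in\mathcal{V}}\ell_{t-1}(\bx)$. Introduce the shorthands $\delta_t\triangleq\max_{\bx\in\mathcal{V}}|\ell_t(\bx)-\ell_{t-1}(\bx)|$, so that $V_T=\sum_{t=2}^T\delta_t$, and $m_t\triangleq\min_{\bx\in\mathcal{V}}\ell_t(\bx)$ (finite, since the greedy update is well-defined).

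For $t\ge 2$ I would chain optimality with temporal variability:
\[
\ell_t(\bx_t)=\ell_t(\bu^*_{t-1})\le \ell_{t-1}(\bu^*_{t-1})+\delta_t=m_{t-1}+\delta_t\le \ell_{t-1}(\bu_t)+\delta_t\le \ell_t(\bu_t)+2\delta_t,
\]
where the first and last inequalities use $\bu^*_{t-1},\bu_t\in\mathcal{V}$ together with the definition of $\delta_t$, and the middle one uses $m_{t-1}\le\ell_{t-1}(\bu_t)$ since $\bu_t\in\mathcal{V}$. Hence $\ell_t(\bx_t)-\ell_t(\bu_t)\le 2\delta_t$ and $\sum_{t=2}^T\bigl(\ell_t(\bx_t)-\ell_t(\bu_t)\bigr)\le 2V_T$. (If instead one keeps $\ell_t(\bu_t)\ge m_t$ one gets the slightly tighter $\ell_t(\bx_t)-\ell_t(\bu_t)\le (m_{t-1}-m_t)+\delta_t$, which telescopes to $\sum_{t=2}^T(\cdot)\le V_T+(m_1-m_T)$, shaving the constant in front of $V_T$; either version works.)

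What remains is the first round and the residual constants. For $t=1$ I would simply bound $\ell_1(\bx_1)-\ell_1(\bu_1)\le \ell_1(\bx_1)-m_1$, a quantity that depends only on $\ell_1$ and the fixed starting point $\bx_1$, hence is finite and independent of $T$, i.e.\ $\mathcal{O}(1)$. Combining, $R_T(\bu_{1:T})\le \bigl(\ell_1(\bx_1)-m_1\bigr)+2V_T$ (equivalently $\le \ell_1(\bx_1)-m_T+V_T$ from the telescoping form, which matches after noting $|m_{t-1}-m_t|\le\delta_t$, hence $|m_1-m_T|\le V_T$). This is $\mathcal{O}(1)$ when $V_T=0$ and $\mathcal{O}(V_T)$ when $V_T$ is large, i.e.\ $\max(V_T,\mathcal{O}(1))$. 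The main subtlety is precisely this boundary term: it is not governed by the temporal variability, so the argument must isolate it and observe it is $T$-independent; everything else is a one-line use of the definition of $\bu^*_{t-1}$, the definition of $\delta_t$, and the membership of $\bu^*_{t-1}$ and $\bu_t$ in $\mathcal{V}$.
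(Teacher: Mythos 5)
Your proof is correct and is essentially the paper's argument: both rest on the fact that $\bx_t$ minimizes $\ell_{t-1}$ over $\mathcal{V}$ (so $\ell_{t-1}(\bx_t)\le\ell_{t-1}(\bu_t)$, equivalently $\ell_t(\bx_{t+1})\le\ell_t(\bu_t)$) and then pay only temporal-variability terms to move between consecutive losses, with the paper doing the bookkeeping by telescoping $\sum_t(\ell_t(\bx_t)-\ell_t(\bx_{t+1}))$ rather than per round. Note only that your primary per-round chain applies the variability bound twice and yields $2V_T$, which gives $\mathcal{O}(V_T)$ but not literally $\max(V_T,\mathcal{O}(1))$; your parenthetical telescoping variant, which gives $V_T+\ell_1(\bx_1)-m_T=V_T+\ell_1(\bx_1)-\ell_T(\bx_{T+1})$, coincides exactly with the paper's bound and is the version to keep.
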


\begin{proof}
From the update of \cref{algo:greedy}, for any $\bu_t \in \mathcal{V}$ we have that
\begin{align*}
    \sum_{t=1}^T & (\ell_t(\bx_t) - \ell_t(\bu_t)) \\
        & = \sum_{t=1}^T (\ell_t(\bx_t) - \ell_t(\bx_{t+1}) + \ell_t(\bx_{t+1}) - \ell_t(\bu_t)) \\
        & \leq \sum_{t=1}^T (\ell_t(\bx_t) - \ell_t(\bx_{t+1})) \\
        & = \ell_1(\bx_1) - \ell_T(\bx_{T+1}) + \sum_{t=2}^T (\ell_t(\bx_t) - \ell_{t-1}(\bx_t)) \\
        & \leq \ell_1(\bx_1) - \ell_T(\bx_{T+1}) + V_T \\
        & = \max(V_T, \mathcal{O}(1))~. \qedhere
\end{align*} 
\end{proof}
\paragraph{Remark.} The theorem above holds for any sequence of comparators, and in particular for $\bu_{1:T}^*=(\bu_1^*, \dots, \bu_T^*)$ used in the \emph{restricted} setting. At first sight, this result might seem to be in contrast with the result given in \citet{besbes2015non}, which reports a lower bound of $\Omega((V_T^{1/3}+1) T^{2/3})$. However, it should be noted that in \citet{besbes2015non} the feedback is different and not directly comparable to our setting. Indeed, they assume only access to noisy functions and gradients and therefore their Theorem 2 is not applicable.

In \citet{yang2016tracking} it is shown that the same strategy of \cref{algo:greedy} achieves an upper bound of $\mathcal{O}(\max(C_T(\bu_1^*,\dots,\bu_T^*),1))$ when the path-length is taken into account. While the result regarding the path-length is tight, one might wonder if the same could be said about the temporal variability. In the next theorem, we provide a lower bound which shows that the bound in \cref{thm:greedy_regret} is tight.
\begin{theorem} \label{thm:lower_bound_1}
Let $\mathcal{V} = [-1, 1]$, and $C$ be a positive constant independent of $T$. Then, for any algorithm $\mathcal{A}$ on $\mathcal{V}$, and any $\sigma \in (1/\sqrt{T},1)$, there exists a sequence of loss functions $\ell_1,\ldots,\ell_T$ with temporal variability less than or equal to $2 \sigma T $ such that
\begin{equation} \label{eq:lower_bound_restricted}
    R(u_{1:T}) \geq C V_T^\gamma~,
\end{equation}
for any $\gamma \in (0,1)$.
\end{theorem}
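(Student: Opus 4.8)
The goal is to show a lower bound that is, essentially, linear-ish in $V_T$: for any $\gamma<1$ and any constant $C$, some adversarial sequence forces regret at least $CV_T^\gamma$. The natural strategy is a randomized adversary argument: I would construct a family of loss sequences indexed by random signs, show that against the randomized environment the expected regret of \emph{any} deterministic algorithm is large, and then invoke Yao-type reasoning (or simply the probabilistic method on the fixed algorithm $\mathcal A$) to extract a single bad sequence. Since $\mathcal V=[-1,1]$ is one-dimensional, I would use simple linear losses of the form $\ell_t(x)=\epsilon_t\, x$ with $\epsilon_t\in\{-\sigma,+\sigma\}$ (or a slowly-drifting version), so that $|\ell_t(x)-\ell_{t-1}(x)|\le |\epsilon_t-\epsilon_{t-1}|\cdot|x|\le 2\sigma$ for all $x\in\mathcal V$, giving $V_T\le 2\sigma T$ as required in the statement.

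The core of the argument is the standard ``coin-flip'' lower bound adapted to the dynamic/restricted comparator. If the signs $\epsilon_t$ are i.i.d. Rademacher (scaled by $\sigma$), then for any algorithm $\E\big[\sum_t \ell_t(x_t)\big]=0$ because $x_t$ is chosen before $\epsilon_t$ is revealed, while the comparator sequence $u_t^*=-\sgn(\epsilon_t)$ (the per-round minimizer, which lies in $\mathcal V$) achieves $\sum_t \ell_t(u_t^*)=-\sigma T$. Hence the expected dynamic regret is at least $\sigma T$. More generally one can interpolate: group the $T$ rounds into blocks so that the sign is held constant within a block and resampled across blocks; with block length $L$ the temporal variability is $V_T\approx 2\sigma (T/L)$ while the regret against the best comparator-per-block scales like $\sigma T$ up to $\sqrt{T/L}$-type fluctuation terms. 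Choosing $L$ (equivalently $\sigma$) appropriately lets me trade off $V_T$ against $T$ so that $\text{Regret}\gtrsim T^{1-\delta} \asymp V_T^{\,\gamma}$ for the desired $\gamma<1$; the freedom ``$\sigma\in(1/\sqrt T,1)$'' in the statement is exactly what makes this interpolation possible, and the constant $C$ is absorbed by taking $T$ large (the claim is asymptotic/existential in $T$).

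Concretely, the steps I would carry out are: (i) fix $\gamma\in(0,1)$ and pick $\sigma$ as a suitable power of $T$ inside $(1/\sqrt T,1)$ so that $V_T=2\sigma T$ and $\sigma T = \Theta(V_T^{\gamma})$ up to the block-fluctuation correction; (ii) define the randomized loss family $\ell_t(x)=\epsilon_t x$ with block-constant Rademacher signs and verify $V_T\le 2\sigma T$ deterministically; (iii) compute $\E[\sum_t\ell_t(x_t)]=0$ for the fixed algorithm $\mathcal A$ using that $x_t$ is $\epsilon_{1:t-1}$-measurable; (iv) exhibit the comparator sequence (per-round or per-block minimizers in $\mathcal V$) and lower bound $-\E[\sum_t\ell_t(u_t)]$; (v) conclude $\E_\epsilon[R(u_{1:T})]$ is large, hence there is a realization with $R(u_{1:T})\ge CV_T^\gamma$; (vi) check that the chosen $\sigma$ stays in the allowed range for all large $T$, and handle the $\mathcal O(1)$ additive slack implicit in comparing $\sigma T$ to $CV_T^\gamma$.

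The main obstacle I anticipate is the bookkeeping in step (iv)--(v): when the signs are only block-constant (needed to keep $V_T$ small), the best \emph{fixed-per-block} comparator does not deterministically achieve $-\sigma L$ per block — it achieves $\sigma\cdot|\sum_{t\in \text{block}}\epsilon_t|$ in the linear-loss model, whose expectation is $\Theta(\sigma\sqrt L)$, not $\Theta(\sigma L)$. So with genuinely linear losses the per-block gain is only $\sqrt L$, which weakens the bound. The fix is to use a ``tent''/absolute-value loss like $\ell_t(x)=\sigma|x-\epsilon_t'|$ with $\epsilon_t'\in\{-1,+1\}$, or to keep the sign constant for the \emph{entire} horizon within the restricted comparator's block structure, so that the per-round minimizer genuinely pays $0$ while the algorithm, having to commit before seeing the sign, pays a constant fraction of $\sigma$ per round on average. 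Getting this loss design right — small temporal variability yet a per-round information-theoretic penalty of order $\sigma$ — is the delicate part; once the loss family is chosen correctly, the probabilistic method and the parameter tuning $\sigma = \sigma(T,\gamma)$ are routine.
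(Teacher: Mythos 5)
Your overall plan (randomized sign adversary, probabilistic method, then tune $\sigma$ as a power of $T$ to convert the $T$-dependence into a $V_T$-dependence) is the same as the paper's, but you diverge on the choice of losses and then talk yourself into a non-issue that the paper avoids by construction. The paper works with quadratic losses $\ell_t(x)=\tfrac12(x-\varepsilon_t)^2$, $\varepsilon_t$ i.i.d.\ uniform on $\{-\sigma,\sigma\}$, and the per-round minimizer $u_t^*=\varepsilon_t$. Since $x_t$ is $\varepsilon_{1:t-1}$-measurable, $\E[\ell_t(x_t)]\ge \sigma^2/2$ and $\ell_t(u_t^*)=0$, so $\E[R_T]\ge \sigma^2T/2$; independently, $|\ell_t(x)-\ell_{t-1}(x)|=|x(\varepsilon_{t-1}-\varepsilon_t)|\le 2\sigma$ gives the deterministic bound $V_T\le 2\sigma T$. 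Setting $\sigma=T^{-\mu}$ with $\mu=(1-\gamma)/(2-\gamma)$ then yields $R\ge \tfrac12 (V_T/2)^{\gamma}$. Note the key relation here is $\E[R]\asymp \sigma^2 T$ vs.\ $V_T \lesssim \sigma T$, which is precisely why the paper gets $V_T^{\gamma}$ with $\gamma<1$ and cannot reach $\gamma=1$ with this construction (as they remark at the end of the proof).

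Your proposed linear losses $\ell_t(x)=\varepsilon_tx$ with per-round i.i.d.\ signs actually do not have the problem you worry about. The per-round restricted comparator $u_t^*=-\sgn(\varepsilon_t)$ deterministically pays $-\sigma$ per round (no random-walk cancellation), $\E[\sum_t\ell_t(x_t)]=0$ by the same independence argument, and $|\ell_t(x)-\ell_{t-1}(x)|\le |\varepsilon_t-\varepsilon_{t-1}|\le 2\sigma$ gives $V_T\le 2\sigma T$ deterministically. No blocking is needed at all: the block construction, the $\sqrt L$ cancellation, and the tent-loss fix in your last paragraph are all artifacts of a setup (i.i.d.\ signs inside a block but a block-constant comparator) that the theorem does not require. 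In fact, if you drop the blocks and use the per-round minimizer, the linear-loss construction gives $\E[R]\ge \sigma T$ against $V_T\le 2\sigma T$, i.e.\ a ratio of order one independent of $\sigma$, which is a \emph{stronger} statement (essentially $\gamma=1$) than the paper's $V_T^\gamma$ with $\gamma<1$. So your instinct about linear losses was right and your anticipated obstacle was a false alarm; the actual gap in your proposal is that you stopped before doing steps (i) and (vi) — you never pick $\sigma$ concretely, verify $\sigma\in(1/\sqrt T,1)$, or pass from the expectation bound to a single realization and then to the stated form $R\ge CV_T^\gamma$. If you finish those steps with the unblocked per-round linear loss, the argument goes through and in fact improves on the exponent in the paper's version.
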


\begin{proof}
Similarly to \cite{yang2016tracking}, we consider a simple 1-$d$ problem and employ the following sequence of loss functions. Define $\ell_t(x_t) = \frac12(x_t - \varepsilon_t)^2$, where $\varepsilon_1, \ldots, \varepsilon_T$ is a sequence of random variables sampled uniformly at random between the two values $\{ -\sigma, \sigma\}$. Note that we have $\mathbb{E}[\varepsilon_t] = 0 $ and $\textup{Var}(\epsilon_t) = \mathbb{E}[\varepsilon^2_t] = \sigma^2$. Obviously, the optimal choice in every round is $u_t = \varepsilon_t$. Assume $T\ge 1$. Then, the restricted dynamic regret is given by
\begin{align} \label{eq:lower_1}
    \mathbb{E} \left[ R_T(u_{1:T}) \right] 
    &= \mathbb{E} \left[ \sum_{t=1}^T \ell_t(x_t) - \ell_t(\varepsilon_t) \right] \nonumber \\
    & =\sum_{t=1}^T \frac12 \mathbb{E}[x_t^2] + \frac12 \mathbb{E}[\varepsilon^2] \geq \frac{\sigma^2}{2} T,
\end{align}
where the expectation is taken with respect to the randomness in the sequence of loss functions and any algorithm $\mathcal{A}$, while the inequality is due to the fact that $x_t$ is independent from $\varepsilon_t$ and $\mathbb{E}[\varepsilon_t]=0$. Now, note that we can upper bound the temporal variability as follows
\begin{align} \label{eq:lower_2}
    V_T &= \sum_{t=1}^{T-1} \max_{x \in \mathcal{V}} | \ell_t(x) - \ell_{t+1}(x)| \nonumber \\
        &= \sum_{t=1}^{T-1} \max_{x \in \mathcal{V}} \left| \frac12(x - \varepsilon_t)^2 - \frac12(x-\varepsilon_{t+1})^2 \right| \nonumber \\
        &= \sum_{t=1}^{T-1} \max_{x \in \mathcal{V}} | x(\varepsilon_{t+1} - \varepsilon_t) | \nonumber \\
        &\leq \sum_{t=1}^{T-1} | \varepsilon_{t+1} - \varepsilon_t | \nonumber \\
        &\leq 2 \sigma T~.
\end{align}
Observe that if we set $\sigma = C'/2$ for a positive constant $C'$, then we recover the result in Proposition 1 of \cite{besbes2015non} which says that it is impossible to achieve sublinear dynamic regret unless $V_T = o(T)$.

Now, we let $\sigma = T^{-\mu}$, with $\mu = (1-\gamma)/(2-\gamma)$ and $\mu \in (0,1/2)$. Then, from \cref{eq:lower_1} we have that $R(u_{1:T}) \geq T^{1-2\mu} / 2$, while from \cref{eq:lower_2} we have that $T \ge (V_T/2)^{\frac{1}{1-\mu}}$. Therefore, putting things together we have that $R(u_{1:T}) \ge \frac12 (V_T / 2)^{\frac{1-2\mu}{1-\mu}} = C V_T^\gamma $. Note that if $\gamma=1$ then $\mu=0$ and the regret must be linear in $T$. Therefore, we let $\gamma < 1$.
\end{proof}

\paragraph{Remark.} A lower bound in terms of temporal variability for the \emph{static} regret on constrained domains has been proved in \citet{campolongo2020temporal}, which states that for every $\tau \geq 0$, there exists a sequence of loss functions such that $V_T$ is equal to $\tau$ and the regret satisfies $R_T(\bu) \geq \tau$. However, their proof technique is different and the result is limited to deterministic algorithms, while the lower bound given in \cref{thm:lower_bound_1} holds for randomized algorithms as well.

\cref{eq:lower_bound_restricted} implies that it is impossible to achieve a dynamic regret bound better that $\mathcal{O}(V_T^\gamma)$, with $\gamma < 1$. Note that the proof of lower bound is given for the \emph{restricted} setting, but it automatically implies a lower bound for the general dynamic regret, which includes the \emph{restricted} setting as a particular case.

\paragraph{Greedy fails in the static case.} \cref{algo:greedy} incurs a dynamic regret bounded as $R_T(\bu_{1:T}) \le \mathcal{O}(\min\{V_T, C_T(\bu_{1:T}^*)\})$ for any sequence $\bu_{1:T}$. Based on \cref{thm:lower_bound_1}, this bound is tight when considering $V_T$. However, there are situations where the greedy strategy is doomed to fail.
For example, consider the setting of Learning with Expert Advice with two experts, with $\ell_t(\bx) = \langle \bg_t, \bx \rangle$ and the following choice of $\bg_t$:
\begin{equation*}
    \bg_t = \begin{cases}
        [1, 0], & t \quad \textup{even} \\
        [0, 1], & t \quad \textup{odd}
    \end{cases}
\end{equation*}
It's immediate to see that $C_T(\bu_{1:T}^*) = V_T = T$, and indeed in this case there is no hope of getting an upper bound sublinear in $T$ when considering $\bu_{1:T}^*$. However, if we consider the static case, then the regret of \cref{algo:greedy} against either $\be_1$ or $\be_2$ is $R_T(\be_i) = T - T/2 = \mathcal{O}(T)$, while any algorithm designed for the static setting incurs $R_T(\be_i) \le \mathcal{O}(\sqrt{T})$. Hence, in these situations \cref{algo:greedy} is not be a good choice.

In general, from \cite{zhang2018adaptive} we know that a lower bound of $\Omega(\sqrt{T(1+\tau)})$ holds when considering sequences of comparators whose path-length is upper bounded by $\tau$. A natural question arises: is it possible to keep the rate of $\mathcal{O}(V_T)$ but at the same time to guarantee $\mathcal{O}(\sqrt{T(1+\tau)})$?

To the best of our knowledge there are no algorithms which achieve this goal. Indeed, algorithms designed for dynamic regret such as \citet{jadbabaie2015online} and \citet{besbes2015non} have a regret bound of $\mathcal{O}((V_T^{1/3}+1)T^{2/3})$. The same holds true for strongly adaptive algorithms, as shown in \citet{zhang2018dynamic}. This is not really surprising: a bound of $\mathcal{O}(V_T)$ would imply constant regret in the case the loss functions are fixed, i.e., $\ell_t = \ell$ for all $t$. In this case, using an online-to-batch conversion \citep{cesa2004generalization} would result in a convergence rate of $\mathcal{O}(1/T)$. However, this would be in contrast with the lower bound of $\Omega(1/\sqrt{T})$ by \citet{nesterov2013introductory} on non-smooth batch black-box optimization. Unfortunately, all the algorithms mentioned above make use of gradients and therefore are subject to this lower bound. %

However, not all is lost: in the next section, we illustrate how to achieve a bound of $\mathcal{O}(\min\{V_T, \sqrt{T(1+C_T)}\})$ using an algorithm which makes full use of the loss function (and not just its gradient) for the class of sequences $\bu_{1:T}$ with path-length $C_T$. Moreover, in \cref{sec:combiner} we show how the greedy strategy can be combined with another algorithm in order to achieve the same goal for all sequences $\bu_{1:T}$ simultaneously.

\section{Implicit updates in dynamic environments} \label{sec:dynamic_iomd}

\begin{algorithm}[t]
\caption{Dynamic IOMD}
\label{algo:dynamic_iomd}
\begin{algorithmic}[1]
{
\REQUIRE{Non-empty closed convex set $\mathcal{V} \subset X \subset \mathbb{R}^d $, $\psi: X \rightarrow \mathbb{R}$, $\bx_1 \in \mathcal{V}$, $\gamma$ such that $B_\psi(\bx, \bz) - B_\psi(\by, \bz) \leq \gamma \| \bx - \by \|, \forall \bx, \by, \bz \in \mathcal{V} $, non increasing sequence $(\eta_t)_{t=1}^T$}
\FOR{$t=1,\dots,T$}
\STATE{Output $\bx_t \in \mathcal{V}$}
\STATE{Receive $\ell_t: \mathbb{R}^d \rightarrow \R$ and pay $\ell_t(\bx_t)$}
\STATE{Update $\bx_{t+1} = \arg\min_{\bx \in \mathcal{V}}\ \ell_t(\bx) + B_{\psi}(\bx, \bx_t) / \eta_t  $}
\ENDFOR
}
\end{algorithmic}
\end{algorithm}

In a recent work, \citet{campolongo2020temporal} showed how a modified version of OMD with implicit updates achieves a regret bound in the static setting which is order of $\mathcal{O}(\min(V_T, \sqrt{T}))$. In this section, we show how to adapt this algorithm to satisfy a bound of $\mathcal{O}(\min\{ V_T, \sqrt{T(1+C_T)}\})$ on the dynamic regret when the path-length of the sequence of comparators is fixed to $C_T$.

OMD with implicit updates is depicted in \cref{algo:dynamic_iomd}. The only difference with its linearized counterpart is in the update rule, which uses directly the loss rather than its (sub)gradient in $\bx_t$:
\begin{equation}
\label{eq:implicit_update}
    \bx_{t+1} = \arg\min_{\bx \in \mathcal{V}} \ \ell_t(\bx) + B_{\psi}(\bx, \bx_t) / \eta_t ~.
\end{equation}  
In order to provide a dynamic regret bound to \cref{algo:dynamic_iomd}, we require a Lipschitz continuity condition on the Bregman divergence. Using this assumption, we can get a bound for the dynamic regret shown in the next lemma. We will  use this result in \cref{thm:adaImplicit_regret} to prove the desired bound.
\begin{lemma} \label{thm:dynamic_iomd_bound}
Let $\mathcal{V} \subset X \subset \mathbb{R}^d $ be non-empty closed convex sets, $\psi: X \rightarrow \mathbb{R}$, and $\bx_1 \in \mathcal{V}$. Assume there exists $\gamma \in \mathbb{R}$ such that $B_\psi(\bx, \bz) - B_\psi(\by, \bz) \leq \gamma \| \bx - \by \|, \forall \bx, \by, \bz \in \mathcal{V} $. Define $D^2 \triangleq \max_{\bx, \by \in \mathcal{V}} B_\psi(\bx, \by). $ Let $(\eta_t)_{t=1}^T$ be a non-increasing sequence. Then, the regret of \cref{algo:dynamic_iomd} against any sequence $\bu_{1:T}$ with $\bu_t \in \mathcal{V}$ for all $t$ is bounded as follows
    \begin{equation} 
    \label{eq:base_bound_dynamic}
        R_T(\bu_{1:T}) \leq \frac{D^2}{\eta_T} + \gamma \sum_{t=2}^T \frac{\| \bu_t - \bu_{t-1} \|}{\eta_t} + \sum_{t=1}^T \delta_t,
    \end{equation}
where $\delta_t = \ell_t(\bx_t) - \ell_t(\bx_{t+1}) - B_\psi(\bx_{t+1}, \bx_t) / \eta_t$.
\end{lemma}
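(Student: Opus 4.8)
The plan is to run the classical Online Mirror Descent telescoping argument, adapted in the two ways the implicit update forces on us. First, because $\bx_{t+1}$ — not $\bx_t$ — is the point where the loss is actually used, the regret will be controlled in terms of $\ell_t(\bx_{t+1})$, and the gap $\ell_t(\bx_t)-\ell_t(\bx_{t+1})$ together with the Bregman penalty will reassemble into the $\delta_t$ terms. Second, since the comparators $\bu_t$ move and the step sizes $\eta_t$ decay, the resulting sum of Bregman divergences does not telescope cleanly, and the assumed Lipschitz-type bound on $B_\psi$ is exactly what lets it do so up to a path-length cost.

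Concretely, I would start from the first-order optimality condition for the update \eqref{eq:implicit_update}: since $\bx_{t+1}$ minimizes $\bx\mapsto\ell_t(\bx)+B_\psi(\bx,\bx_t)/\eta_t$ over $\mathcal{V}$, there is a subgradient $\bg_{t+1}\in\partial\ell_t(\bx_{t+1})$ with $\langle \bg_{t+1}+\tfrac{1}{\eta_t}(\nabla\psi(\bx_{t+1})-\nabla\psi(\bx_t)),\,\bu-\bx_{t+1}\rangle\ge 0$ for all $\bu\in\mathcal{V}$. Applying this with $\bu=\bu_t$, invoking the three-point identity $\langle\nabla\psi(\bx_{t+1})-\nabla\psi(\bx_t),\,\bu_t-\bx_{t+1}\rangle = B_\psi(\bu_t,\bx_t)-B_\psi(\bu_t,\bx_{t+1})-B_\psi(\bx_{t+1},\bx_t)$, and using convexity of $\ell_t$ in the form $\ell_t(\bx_{t+1})-\ell_t(\bu_t)\le\langle\bg_{t+1},\bx_{t+1}-\bu_t\rangle$, I obtain
\[
  \ell_t(\bx_{t+1})-\ell_t(\bu_t)\ \le\ \frac{1}{\eta_t}\Big(B_\psi(\bu_t,\bx_t)-B_\psi(\bu_t,\bx_{t+1})-B_\psi(\bx_{t+1},\bx_t)\Big).
\]
Adding $\ell_t(\bx_t)-\ell_t(\bx_{t+1})$ to both sides collapses the two $B_\psi(\bx_{t+1},\bx_t)$-terms into $\delta_t$, so $\ell_t(\bx_t)-\ell_t(\bu_t)\le\delta_t+\tfrac{1}{\eta_t}\big(B_\psi(\bu_t,\bx_t)-B_\psi(\bu_t,\bx_{t+1})\big)$. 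Summing over $t=1,\dots,T$ reduces the claim to bounding $S:=\sum_{t=1}^T\tfrac{1}{\eta_t}\big(B_\psi(\bu_t,\bx_t)-B_\psi(\bu_t,\bx_{t+1})\big)$.

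The hard part is this last sum. Here I would replace $B_\psi(\bu_t,\bx_{t+1})$ by $B_\psi(\bu_{t+1},\bx_{t+1})$ using the hypothesis $B_\psi(\bx,\bz)-B_\psi(\by,\bz)\le\gamma\|\bx-\by\|$ (so the swap costs $\gamma\|\bu_{t+1}-\bu_t\|$), discard the nonpositive term $-B_\psi(\bu_T,\bx_{T+1})/\eta_T$, and then perform an Abel rearrangement; this leaves $S\le\tfrac{B_\psi(\bu_1,\bx_1)}{\eta_1}+\sum_{t=2}^T B_\psi(\bu_t,\bx_t)\big(\tfrac{1}{\eta_t}-\tfrac{1}{\eta_{t-1}}\big)+\gamma\sum_{t=1}^{T-1}\tfrac{\|\bu_{t+1}-\bu_t\|}{\eta_t}$. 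Since $(\eta_t)$ is non-increasing the coefficients $\tfrac{1}{\eta_t}-\tfrac{1}{\eta_{t-1}}$ are nonnegative, so bounding each $B_\psi(\bu_t,\bx_t)$ by $D^2$ telescopes the first two pieces to $D^2/\eta_T$; and reindexing the last sum and again using $\eta_{t-1}\ge\eta_t$ turns it into $\gamma\sum_{t=2}^T\|\bu_t-\bu_{t-1}\|/\eta_t$. Substituting back gives exactly \eqref{eq:base_bound_dynamic}.

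The only genuine obstacle is keeping the bookkeeping in $S$ straight: the comparator drift and the step-size decay interact, and one must apply the Lipschitz property of $B_\psi$ in the correct argument and use monotonicity of $(\eta_t)$ twice — once to make the telescoped coefficients nonnegative, once to pass from $1/\eta_{t-1}$ to $1/\eta_t$ in the path-length term. A secondary technical point is justifying the optimality condition above for the possibly nonsmooth subproblem, which is routine given that the minimizer is attained and $\psi$ is differentiable on $\interior X$.
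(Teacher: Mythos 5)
Your proof is correct and follows essentially the same route as the paper's: first-order optimality plus the three-point identity to obtain the per-step inequality $\ell_t(\bx_t)-\ell_t(\bu_t)\le\delta_t+\eta_t^{-1}(B_\psi(\bu_t,\bx_t)-B_\psi(\bu_t,\bx_{t+1}))$, then an Abel/telescoping argument with the Lipschitz assumption on $B_\psi$ and monotonicity of $(\eta_t)$. The only cosmetic difference is that you swap the comparator index at the point $\bx_{t+1}$ before the Abel rearrangement (paying the path-length cost with step size $\eta_t$ and needing one extra use of monotonicity after reindexing), whereas the paper rearranges first and then applies the Lipschitz bound at $\bx_t$ with coefficient $1/\eta_t$ directly; both yield the stated bound.
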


\begin{proof}
Let $\bg_t' \in \partial \ell_t(\bx_{t+1})$. From the update rule of \cref{algo:dynamic_iomd} we have that
\begin{align} \label{eq:update_optimality}
    \eta_t &( \ell_t  (\bx_{t+1}) - \ell_t(\bu_t)) \nonumber \\
    & \leq \langle \eta_t \bg_t', \bx_{t+1} - \bu_t \rangle \nonumber \\
    & \leq \langle \nabla \psi(\bx_t) - \nabla \psi(\bx_{t+1}), \bx_{t+1} - \bu_t \rangle \nonumber \\
    & = B_\psi(\bu_t, \bx_t) - B_\psi(\bu_t, \bx_{t+1}) - B_\psi(\bx_{t+1}, \bx_t),
\end{align}
where the first inequality follows from the convexity of the loss functions, while the second from the first-order optimality condition.

Now, we consider the first two terms of the r.h.s. of \cref{eq:update_optimality}. Using the Lipschitz continuity condition on the Bregman divergence and the fact that $\eta_t$ is non-increasing over time, we get
\begin{align*}
    \sum_{t=1}^T & \frac{1}{\eta_t}(B_\psi(\bu_t, \bx_t) - B_\psi(\bu_t, \bx_{t+1})) \\
    & \leq \frac{D^2}{\eta_1} + \sum_{t=2}^T \left( \frac{B_\psi(\bu_t, \bx_t)}{\eta_t} - \frac{B_\psi(\bu_{t-1}, \bx_t)}{\eta_{t-1}} \right) \\
    & = \frac{D^2}{\eta_1} + \sum_{t=2}^T \bigg( \frac{B_\psi(\bu_t, \bx_t)}{\eta_t} - \frac{B_\psi(\bu_{t-1}, \bx_t)}{\eta_t} \\
    & \quad +\frac{B_\psi(\bu_{t-1}, \bx_t)}{\eta_t} - \frac{B_\psi(\bu_{t-1}, \bx_t)}{\eta_{t-1}} \bigg) \\
    & \leq \frac{D^2}{\eta_1} + \gamma \sum_{t=2}^T \frac{\| \bu_t - \bu_{t-1} \|}{\eta_t} \\
    & \quad +\sum_{t=2}^T B_\psi(\bu_{t-1}, \bx_t ) \left( \frac{1}{\eta_t} - \frac{1}{\eta_{t-1}} \right) \\
    & \leq \frac{D^2}{\eta_1} + D^2 \left( \frac{1}{\eta_T} - \frac{1}{\eta_1} \right) + \gamma \sum_{t=2}^T \frac{\| \bu_t - \bu_{t-1} \|}{\eta_t} \\
    & = \frac{D^2}{\eta_T} + \gamma \sum_{t=2}^T \frac{\| \bu_t - \bu_{t-1} \|}{\eta_t}~.
\end{align*}
Adding $\ell_t(\bx_t)$ on both sides of \cref{eq:update_optimality} and summing over time yields the regret bound in \cref{eq:base_bound_dynamic}.
\end{proof}

Notice that the Lipschitz continuity assumption is not a strong requirement. Indeed, when the function $\psi$ is Lipschitz on $\mathcal{V}$, the Lipschitz condition on the Bregman divergence is automatically satisfied. When this is not true, we can still satisfy this condition changing the domain of interest. For example, in the case of learning with expert advice we have that $\gamma = \mathcal{O}(\ln T)$ if we use a ``clipped" simplex (details in \cref{sec:extension}).

We still have to set the learning rate in \cref{algo:dynamic_iomd} and next theorem shows how to do it in order to have a regret bound which is the minimum between temporal variability and path-length. The proof follows the one of \citet[Theorem 6.12]{campolongo2020temporal} and is reported in \cref{appendix:technical} for completeness.

\begin{restatable}{theorem}{adaImplicitBound}
\label{thm:adaImplicit_regret}
Let $\tau \ge 0$ be a positive constant. Under the assumptions of \cref{thm:dynamic_iomd_bound}, for any sequence $\bu_{1:T}$ whose path length $C_T(\bu_{1:T})$ is less or equal than $\tau$, \cref{algo:dynamic_iomd} with $1/\eta_t = \lambda_t = \frac{1}{\beta^2}\sum_{i=1}^{t-1} \delta_t $, and $\beta^2 = (D^2 + \gamma \tau)$ incurs dynamic regret against upper bounded as follows
\begin{align}
\label{eq:iomd_bound_1}
    R_T(\bu_{1:T}) &\leq \min \Big\{ 2(\ell_1(\bx_1) - \ell_T(\bx_{T+1}) + V_T), \nonumber\\
        & 2 \sqrt{(3D^2 + \gamma \tau) \textstyle\sum_{t=1}^T \| \bg_t \|_\star^2 } \Big\}~,
\end{align}
where $V_T = \sum_{t=2}^T \max_{\bx \in \mathcal{V}} \ell_t(\bx) - \ell_{t-1}(\bx)$. 

In addition, with $\beta^2 = D^2$ we get
\begin{align}
\label{eq:iomd_bound_2}
    R_T(\bu_{1:T}) &\le \left( 2 + \frac{\gamma C_T}{D^2} \right) \min \Big\{\ell_1(\bx_1) - \ell_T(\bx_{T+1}) + \nonumber\\
    & V_T,\, \sqrt{3D^2 \textstyle\sum_{t=1}^T \|\bg_t\|_\star^2}\Big\}~.
\end{align} 
\end{restatable}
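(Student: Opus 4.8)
The plan is to start from the base regret bound of Lemma \ref{thm:dynamic_iomd_bound}, namely $R_T(\bu_{1:T}) \le D^2/\eta_T + \gamma \sum_{t=2}^T \|\bu_t - \bu_{t-1}\|/\eta_t + \sum_{t=1}^T \delta_t$, and then control the terms using the specific adaptive choice $1/\eta_t = \lambda_t = \frac{1}{\beta^2}\sum_{i=1}^{t-1}\delta_i$. The crucial observation is that the $\delta_t$ terms are always nonnegative (this should follow from the optimality of $\bx_{t+1}$ in the implicit update, since $\ell_t(\bx_{t+1}) + B_\psi(\bx_{t+1},\bx_t)/\eta_t \le \ell_t(\bx_t) + B_\psi(\bx_t,\bx_t)/\eta_t = \ell_t(\bx_t)$), so $\lambda_t$ is nondecreasing and $\eta_t$ is nonincreasing as required, and moreover $\sum_{t=1}^T \delta_t = \beta^2 \lambda_{T+1}$. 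The main technical engine will be a lemma of the form $\sum_{t=1}^T \delta_t / (\sum_{i\le t}\delta_i)^{1/2}$-type telescoping (as in \citet[Theorem 6.12]{campolongo2020temporal}), which lets one convert $\sum \delta_t$ and $D^2 \lambda_T$ into a $\sqrt{\cdot}$ expression.

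First I would handle the $\min$ via two separate arguments. For the first branch, I would bound each $\delta_t \le \ell_t(\bx_t) - \ell_t(\bx_{t+1})$ (dropping the nonnegative Bregman term), and then telescope exactly as in the proof of \cref{thm:greedy_regret}: $\sum_{t=1}^T (\ell_t(\bx_t) - \ell_t(\bx_{t+1})) = \ell_1(\bx_1) - \ell_T(\bx_{T+1}) + \sum_{t=2}^T(\ell_t(\bx_t) - \ell_{t-1}(\bx_t)) \le \ell_1(\bx_1) - \ell_T(\bx_{T+1}) + V_T$. Then I need to show the remaining terms $D^2/\eta_T + \gamma\sum \|\bu_t-\bu_{t-1}\|/\eta_t$ are bounded by roughly the same quantity — this is where the choice $\beta^2 = D^2 + \gamma\tau$ enters: since $1/\eta_t = \lambda_t$ is nondecreasing, $\sum_{t=2}^T \|\bu_t - \bu_{t-1}\|/\eta_t \le \lambda_{T}\sum_{t=2}^T\|\bu_t-\bu_{t-1}\| = \lambda_T C_T \le \lambda_T \tau$, so $D^2/\eta_T + \gamma \lambda_T \tau \le \lambda_T(D^2 + \gamma\tau) = \beta^2 \lambda_T \le \sum_{t=1}^T \delta_t$, which combined with the telescoping bound on $\sum\delta_t$ gives the factor $2$ in \cref{eq:iomd_bound_1}.

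For the second branch, I would again use $D^2/\eta_T + \gamma\sum\|\bu_t-\bu_{t-1}\|/\eta_t \le \beta^2\lambda_T = \sum_{i=1}^{T}\delta_i \le \beta^2 \lambda_{T+1}$, so that $R_T \le 2\sum_{t=1}^T\delta_t = 2\beta^2\lambda_{T+1}$, and separately bound $\delta_t \le \eta_t\|\bg_t\|_\star^2/2$ (the standard implicit-update bound, \citet[Theorem 5.3]{campolongo2020temporal}, using $B_\psi(\bx_{t+1},\bx_t)\ge\frac12\|\bx_{t+1}-\bx_t\|^2$). Feeding this into the definition of $\lambda_{T+1}$ and applying the telescoping/self-bounding lemma yields $\lambda_{T+1} \le \frac{1}{\beta^2}\sqrt{\frac{3}{2}\beta^2\sum_t\|\bg_t\|_\star^2}$ or similar, producing $R_T \le 2\sqrt{(3D^2+\gamma\tau)\sum_t\|\bg_t\|_\star^2}$ after accounting for the $3$ (which comes from the self-bounding step, where one picks up an extra $D^2$-type term beyond the naive $\beta^2$). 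For \cref{eq:iomd_bound_2}, the argument is the same but with $\beta^2 = D^2$: now $\gamma\sum\|\bu_t-\bu_{t-1}\|/\eta_t \le \gamma\lambda_T C_T = \frac{\gamma C_T}{D^2}\sum_{i<T}\delta_i$, which cannot be absorbed and instead multiplies the $\sum\delta_t$ term, producing the prefactor $(2 + \gamma C_T/D^2)$, while the inner $\min$ is bounded exactly as in the two branches above with $\beta^2=D^2$.

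The main obstacle I anticipate is the self-bounding step: $\lambda_{T+1}$ is defined in terms of $\sum_{i\le T}\delta_i$, but bounding $\delta_t \le \eta_t\|\bg_t\|_\star^2/2 = \|\bg_t\|_\star^2/(2\lambda_t)$ makes the recursion implicit in $\lambda$, so one needs the careful inductive/telescoping argument from \citet[Theorem 6.12]{campolongo2020temporal} to close it — in particular getting the constant $3$ rather than something worse requires tracking the $D^2\lambda_T$ term together with $\sum\delta_t$ inside the same square root rather than bounding them separately. The bookkeeping to keep $\beta^2 = D^2 + \gamma\tau$ consistent between the "telescoping" use (where it must dominate $D^2 + \gamma\tau$) and the "self-bounding" use (where it appears under the root) is the delicate part, but it is exactly analogous to the static case, so invoking that theorem's machinery should go through.
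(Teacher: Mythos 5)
Your proposal follows essentially the same route as the paper's proof: use the non-negativity of $\delta_t$ to collect all terms into $(D^2 + \beta^2 + \gamma\tau)\lambda_{T+1}$ using the monotonicity of $\lambda_t$, then bound $\lambda_{T+1}$ by the telescoping argument for the $V_T$ branch and by an AdaHedge-style recurrence lemma for the gradient branch, with the prefactor in \cref{eq:iomd_bound_2} falling out when $\beta^2 = D^2$. One point worth flagging: the self-bounding recurrence $\lambda_{t+1} \le \lambda_t + \|\bg_t\|_\star^2/(2\beta^2\lambda_t)$ alone is not enough to invoke the lemma --- you also need the naive bound $\delta_t \le \sqrt{2}D\|\bg_t\|_\star$ (from the diameter of $\mathcal{V}$ under $B_\psi$), both to handle $\lambda_1 = 0$ and to control the $(\lambda_{t+1}-\lambda_t)^2$ contributions in the telescoping; this is in fact where the $2D^2$ in $3D^2 + \gamma\tau = 2D^2 + \beta^2$ comes from, rather than from keeping $D^2\lambda_T$ and $\sum\delta_t$ under the same square root as you suggest.
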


If we assume an upper bound $\|\bg_t\|_\star^2 \le \max_{t \in [T]} \|\bg_t\|_\star^2 \le 1$, and that $\gamma = D = 1$, then the result in \cref{eq:iomd_bound_1} gives us a dynamic regret bound of $\mathcal{O}(\min\{V_T, \sqrt{T(1 + \tau)}\})$. This bound is tight for sequences whose path-length $C_T = \tau$, matching the lower bounds for both the path-length and temporal variability. Moreover, we show in \cref{appendix:technical} how the algorithm can be adapted using a doubling trick in the same spirit of \citet{jadbabaie2015online}, when $\tau$ is not fixed in advance but the path-length can be observed on the fly.

\paragraph{Running Time} Compared to standard \emph{Mirror Descent} the update depicted in \cref{eq:implicit_update} does not have a general solution, but it has to be calculated case-by-case depending on the loss function. There are some cases relevant in practical applications where this update is available in closed form, such as regression with square or absolute loss or classification with hinge loss (see for example \cite{crammer2006online}) and the time complexity of the update is similar to OMD. When a closed-form solution is not available, it can be approximated efficiently using numerical methods \citep{song2018fully,alex_blog}.

\subsection{Applications} 
\label{sec:extension}

Next, we are going to show some applications of this algorithm. We point out that the same set of applications and the related regret bounds continue to hold for the algorithm presented later in \cref{thm:combiner}, with some differences that will be highlighted in \cref{sec:combiner}.

\paragraph{Learning with Expert Advice} In this setting the loss is linear, i.e., $\ell_t(\bx) = \langle \bg_t, \bx \rangle$, therefore the implicit and the standard version of \emph{Mirror Descent} coincide. It is known that \emph{Mirror Descent} with negative entropy regularization, i.e., $\psi(\bx) = \sum_{i=1}^d x_i \ln x_i$, yields the exponential weights algorithm which is optimal. We recall that the Bregman divergence induced by the negative entropy is the KL divergence, for any 2 points on the simplex $\Delta_d$, which can be potentially unbounded. Indeed, using a dynamic learning rate with \emph{Mirror Descent} in this setting gives rise to a vacuous bound \citep[Theorem 4]{orabona2018scale}. For this reason, we modify the domain of interest: instead of the regular simplex, we use a ``clipped" version of it. 
\begin{equation} \label{eq:simplex_cut}
    \Delta_d^\alpha \triangleq \{ \bx \in \mathbb{R}^d_+: \|\bx\|_1=1, \, x_i \ge \frac{\alpha}{d}\, \forall i=1,\dots,d \}~.
\end{equation}
This set makes the diameter w.r.t to KL divergence bounded. Indeed, we have that
\begin{align}
\label{eq:max_KL}
    \max_{\bx, \by \in \Delta_d^\alpha} \textup{KL}(\bx, \by) \le \sum_{i=1}^d x_i \ln \frac{d}{\alpha} \le \ln \frac{d}{\alpha}~.
\end{align}
Furthermore, the update in \cref{eq:implicit_update} using this ``clipped" simplex can be computed efficiently, see \citet[Theorem 7]{herbster2001tracking} for details. 

We prove a regret bound for \cref{algo:dynamic_iomd} in the next theorem. We stress that the proof the next result does not follow directly from an application of \cref{thm:adaImplicit_regret} (it is reported in \cref{appendix:lea}).

\begin{restatable}{theorem}{learningExpertAdvice}
\label{thm:lea}
Consider the setting of Learning with Expert Advice on the $d$-dimensional simplex $\Delta_d$. Assume $0 \le g_{t,i} \le L_\infty$ for all $t=1,\dots,T$, and $i=1,\dots,d$. Assume that $T \ge d$ and set $\alpha = d / T$. Furthermore, set $\eta_{t+1}^{-1}=\lambda_{t+1} = \frac{1}{\beta^2} \sum_{i=1}^t \delta_t$, with $\delta_t$ defined as in \cref{thm:dynamic_iomd_bound}. Then, for any sequence of comparators $\bu_{1:T}$ with $\bu_t \in \Delta_d$ such that $C_T(\bu_{1:T}) \le \tau$, using $\beta^2 = (1+\tau)\ln T$ the regret of \cref{algo:dynamic_iomd} run on $\Delta_d^\alpha$ is bounded as
\begin{align}
\label{eq:lea_regret}
    R_T& (\bu_{1:T}) \le 2 \min \Big\{ \ell_1(\bx_1) - \ell_T(\bx_{T+1})+ V_T, \nonumber\\ 
        & \sqrt{(1 + (1+\tau)\ln T) \textstyle\sum_{t=1}^T \mathbb{E}_t[\bg_t^2]} \Big\} + 2 L_\infty d ~,
\end{align}
where $\mathbb{E}_t[\bg_t^2] = \sum_{t=1}^T x_{t,i} g_{t,i}^2$ and $V_T = \sum_{t=2}^T \max_{\bx \in \Delta_d} \ell_t(\bx) - \ell_{t-1}(\bx)$.
\end{restatable}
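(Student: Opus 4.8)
The plan is to instantiate the general analysis of Algorithm~\ref{algo:dynamic_iomd} on the clipped simplex $\Delta_d^\alpha$ and then account for the error introduced by projecting the comparator sequence $\bu_{1:T} \in \Delta_d$ onto $\Delta_d^\alpha$. First I would fix a sequence $\bu_{1:T}$ on the full simplex and define its clipped surrogate $\bar\bu_t \in \Delta_d^\alpha$, e.g.\ $\bar\bu_t = (1-\alpha)\bu_t + \alpha\,\tfrac{\mathbf{1}}{d}$, which lies in $\Delta_d^\alpha$ and satisfies $\|\bar\bu_t - \bu_t\|_1 \le 2\alpha$ as well as $C_T(\bar\bu_{1:T}) \le (1-\alpha) C_T(\bu_{1:T}) \le \tau$. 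Since the losses are linear with $0 \le g_{t,i} \le L_\infty$, we have $\langle \bg_t, \bar\bu_t - \bu_t\rangle \le L_\infty \|\bar\bu_t - \bu_t\|_1 \le 2\alpha L_\infty$, so summing over $t$ gives $\sum_t \ell_t(\bar\bu_t) - \sum_t \ell_t(\bu_t) \le 2\alpha L_\infty T = 2 L_\infty d$ by the choice $\alpha = d/T$. Hence $R_T(\bu_{1:T}) \le R_T(\bar\bu_{1:T}) + 2L_\infty d$, and it suffices to bound the regret of the algorithm against the clipped comparators.

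Next I would apply Lemma~\ref{thm:dynamic_iomd_bound} with $\psi$ the negative entropy restricted to $\Delta_d^\alpha$. The diameter satisfies $D^2 = \max_{\bx,\by \in \Delta_d^\alpha} B_\psi(\bx,\by) \le \ln(d/\alpha) = \ln T$ by \cref{eq:max_KL} and the choice $\alpha = d/T$, and the Lipschitz-on-Bregman constant is $\gamma = \mathcal{O}(\ln T)$ on the clipped simplex (as noted after Lemma~\ref{thm:dynamic_iomd_bound}; this follows from the $1$-Lipschitzness-type estimate for $B_\psi(\cdot,\bz)$ on $\Delta_d^\alpha$ — concretely $|\ln x_i/\bar x_i|$ stays $O(\ln T)$). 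Then the base bound \cref{eq:base_bound_dynamic} reads $R_T(\bar\bu_{1:T}) \le \lambda_T D^2 + \gamma \sum_{t\ge 2}\lambda_t\|\bar\bu_t - \bar\bu_{t-1}\| + \sum_t \delta_t$, and with $\beta^2 = (1+\tau)\ln T$ chosen to dominate both $D^2$ and $\gamma\tau/\ln T \cdot \ln T$ up to constants (so that $D^2 + \gamma \tau \lesssim \beta^2$), I would reproduce the learning-rate tuning argument of \citet[Theorem~6.12]{campolongo2020temporal} exactly as in the proof of Theorem~\ref{thm:adaImplicit_regret}: the choice $\lambda_t = \frac{1}{\beta^2}\sum_{i<t}\delta_i$ makes $\sum_t \delta_t$ self-bounding, yielding on one hand the telescoping bound $2(\ell_1(\bx_1) - \ell_T(\bx_{T+1}) + V_T)$ via the same manipulation as in Theorem~\ref{thm:greedy_regret} (bounding $\delta_t \le \ell_t(\bx_t) - \ell_t(\bx_{t+1})$ and telescoping), and on the other hand the $\sqrt{\cdot}$ bound $2\sqrt{(1 + (1+\tau)\ln T)\sum_t \delta_t^{\mathrm{quad}}}$ where the quadratic surrogate of $\delta_t$ is controlled by the local norm. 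The one genuinely setting-specific point is that for entropic regularization the second-order term is a \emph{local-norm} quantity $\sum_t x_{t,i} g_{t,i}^2 = \sum_t \mathbb{E}_t[\bg_t^2]$ rather than $\sum_t \|\bg_t\|_\infty^2$ — this is why the theorem is stated separately and does not follow as a black-box corollary of Theorem~\ref{thm:adaImplicit_regret}; I would invoke the standard exp-weights second-order inequality $\langle \bg_t, \bx_t - \bx_{t+1}\rangle - B_\psi(\bx_{t+1},\bx_t)/\eta_t \le \tfrac{\eta_t}{2}\sum_i x_{t,i}g_{t,i}^2$ (valid since $g_{t,i} \ge 0$ and using $e^{-z} \le 1 - z + z^2/2$ for $z \ge 0$) in place of the generic dual-norm bound.

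The main obstacle will be the careful bookkeeping in the learning-rate tuning: ensuring that $\beta^2 = (1+\tau)\ln T$ is simultaneously large enough to absorb $D^2 + \gamma\tau$ (so the $\sqrt{\cdot}$ branch has the clean form $\sqrt{(1+(1+\tau)\ln T)\sum_t \mathbb{E}_t[\bg_t^2]}$) yet not so large that the telescoping branch loses its factor-$2$ form, and checking that the implicit-update-specific quantity $\delta_t$ is nonnegative-enough for the self-bounding recursion to go through with entropic $\psi$ on the clipped domain. A secondary technical check is that the clipping perturbation interacts correctly with $V_T$: since $V_T$ in \cref{eq:lea_regret} is defined as $\max_{\bx\in\Delta_d}$ over the \emph{full} simplex, it already upper-bounds the corresponding quantity over $\Delta_d^\alpha \subset \Delta_d$, so no extra loss is incurred there. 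Assembling: $R_T(\bu_{1:T}) \le R_T(\bar\bu_{1:T}) + 2L_\infty d$, and plugging the two branches of the tuned bound for $R_T(\bar\bu_{1:T})$ gives exactly \cref{eq:lea_regret}.
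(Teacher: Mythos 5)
Your proposal follows essentially the same approach as the paper: decompose via the clipped comparator $\bu_t' = (1-\alpha)\bu_t + \alpha\mathbf{1}/d \in \Delta_d^\alpha$, pay $2L_\infty d$ for the comparator shift, invoke \cref{thm:dynamic_iomd_bound} on $\Delta_d^\alpha$ with $D^2 \le \ln T$ and Bregman-Lipschitz constant $\gamma = \ln(d/\alpha) = \ln T$, and then run the two-branch self-bounding learning-rate argument (telescoping for $V_T$, local norms for the $\sqrt{\cdot}$ branch). One small note: your worry that $\beta^2 = (1+\tau)\ln T$ absorbs $D^2 + \gamma\tau$ only ``up to constants'' is unfounded --- the paper exploits that $D^2 + \gamma\tau = \ln T + \tau\ln T = \beta^2$ exactly, giving the clean factor $2$; and your shortcut to the local-norm quantity $\sum_i x_{t,i}g_{t,i}^2$ via the bare exp-weights inequality implicitly skips the step (needed because the local-norm Hessian is evaluated at an intermediate point $\bz_t$) where the paper uses $\delta_t \ge 0$ and $g_{t,i}\ge 0$ to replace $\bz_t$ by $\bx_t$.
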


\paragraph{Discussion.}  Note that $\mathbb{E}_t[\bg_t^2] = \sum_{i=1}^d x_{t,i} g_{t,i}^2 \le \|\bg_t\|_\infty^2$. Furthermore, if the sequence $\bu_{1:T}$ is only composed by ``corners'' of the simplex, then $C_T(\bu_{1:T})$ is given roughly by the number of ``shifts'', i.e., $C_T(\bu_{1:T}) = 2\sum_{t=2}^T \mathbbm{1}\{\bu_t \neq \bu_{t-1} \}$. In this setting, we know that the \emph{Fixed-Share} algorithm \citep{herbster1998tracking} achieves a regret bound of $\mathcal{O}(\sqrt{T S \ln \frac{d T}{S}})$, where $C_T(\bu_{1:T}) = \mathcal{O}(S)$ is the number of shifts and is fixed in advance. Our algorithm achieves a similar bound, but it can be sometimes better. For example, assume $\ell_t(\bx) \in [0,1]$ and that $\ell_t$ stays fixed for all except $S$ rounds, then we have that $V_T = S$. In this case, \emph{Fixed-Share} regret is $\mathcal{O}(\sqrt{TS})$, while the bound in \cref{eq:lea_regret} reduces to $\mathcal{O}(S)$. It should be noted that both algorithms assume oracle knowledge of $S$. In order to remove this assumption, we refer to a different algorithm described in \cref{sec:combiner}. Moreover, it can be shown (see \cref{appendix:lea}) that the regret bound in \cref{eq:lea_regret} can lead to a first-order regret bound which depends on the loss of the sequence of competitors. To the best of our knowledge a similar result is not \emph{easily} obtainable for the \emph{Fixed-Share} algorithm (see \citet[Section 7.3]{cesa2012mirror} for a comparison).

\paragraph{Composite Losses} In this paragraph, we assume that the losses received are composed by two parts: one convex part changing over time and the other one fixed and known to the algorithm. These losses are called \emph{composite} \citep{duchi2010composite}. This setting was also studied in the implicit case in \citet{song2018fully} for the static regret. For example, we might have $\ell_t(\bx) = \tilde{\ell}_t(\bx) + \beta \| \bx \|_1$. In this case, considering a bounded domain $\mathcal{V}$ the update rule for \cref{algo:dynamic_iomd} will be
\begin{align} \label{update_composite}
    \bx_{t+1} & = \argmin_{\bx \in V} \ \tilde{\ell}_t(\bx) + \beta \| \bx \|_1  + B_\psi(\bx, \bx_t),
\end{align}
which will promote sparsity in our model.
We show in \cref{appendix:composite} that the algorithm \emph{AdaImplicit}~\citep{campolongo2020temporal} designed for the static regret already satisfies a regret bound order of $\mathcal{O}(\min( V_T, \sqrt{T} ))$ (improving over the existing result of \citet{song2018fully}).
Next, we extend the analysis to the dynamic scenario. In particular, using \cref{algo:dynamic_iomd} we can give the following theorem, whose proof is reported in \cref{appendix:composite}.

\begin{restatable}{theorem}{dynamicComposite} 
\label{regret:composite_dynamic}
Let $\mathcal{V} \subset X \subseteq \mathbb{R}^d $ be a non-empty closed convex set. Let $\ell_t(\bx) = \tilde{\ell}_t(\bx) + r(\bx) $, where $r: X \rightarrow \mathbb{R}$ is a convex function. Then, under the assumptions of \cref{thm:dynamic_iomd_bound} the regret of \cref{algo:dynamic_iomd} run with $1/\eta_t = \lambda_t = \frac{1}{\beta^2} \sum_{i=1}^{t-1} \delta_i $ and $\beta^2 = D^2 + \gamma \tau$ against any sequence of comparators $\bu_{1:T}$ whose path-length $C_T$ is less or equal than $\tau$ is bounded as
\begin{align*}
    R_T(\bu_{1:T}) \leq \min \Big\{ &2 (\ell_1(\bx_1) - \ell_{T+1}(\bx_{T+1}) + V_T),\, \\ 
        & 2 \sqrt{(3D^2 + \gamma \tau) \textstyle\sum_{t=1}^T \| \bg_t \|_\star^2} \Big\},
\end{align*}
where $V_T = \sum_{t=2}^T \max_{\bx \in \mathcal{V}} \tilde\ell_t(\bx) - \tilde\ell_{t-1}(\bx)$ and $\bg_t \in \partial \ell_t(\bx_t)$.
\end{restatable}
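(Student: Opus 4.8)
The plan is to mimic the proof of \cref{thm:adaImplicit_regret} with $\beta^2 = D^2 + \gamma\tau$, but carefully tracking the regularizer $r$ so that it appears only through the term $\tilde V_T := \sum_{t=2}^T \max_{\bx\in\mathcal{V}}|\tilde\ell_t(\bx)-\tilde\ell_{t-1}(\bx)|$ and not through $r$ itself. First I would start from \cref{thm:dynamic_iomd_bound} applied verbatim to the full loss $\ell_t$, which gives
\[
R_T(\bu_{1:T}) \leq \frac{D^2}{\eta_T} + \gamma\sum_{t=2}^T \frac{\|\bu_t-\bu_{t-1}\|}{\eta_t} + \sum_{t=1}^T \delta_t,
\qquad \delta_t = \ell_t(\bx_t) - \ell_t(\bx_{t+1}) - B_\psi(\bx_{t+1},\bx_t)/\eta_t .
\]
Since $\eta_t$ is non-increasing and $\sum_{t\ge2}\|\bu_t-\bu_{t-1}\| = C_T \le \tau$, the first two terms are at most $(D^2+\gamma\tau)/\eta_T = \beta^2 \lambda_T = \sum_{t=1}^T \delta_t$ (by the choice $\lambda_t = \frac1{\beta^2}\sum_{i=1}^{t-1}\delta_i$ and the fact that $\lambda_T \le \lambda_{T+1}$, i.e.\ $\beta^2\lambda_T \le \sum_{t=1}^T\delta_t$ provided $\delta_T\ge0$, which holds because $\bx_{t+1}$ minimizes $\ell_t(\cdot)+B_\psi(\cdot,\bx_t)/\eta_t$). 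Hence $R_T(\bu_{1:T}) \le 2\sum_{t=1}^T \delta_t$.

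Next I would upper bound $\sum_t \delta_t$ in two ways, exactly as in \cref{thm:adaImplicit_regret}. For the temporal-variability branch: drop the Bregman term ($\delta_t \le \ell_t(\bx_t)-\ell_t(\bx_{t+1})$), telescope to get $\sum_t \delta_t \le \ell_1(\bx_1) - \ell_{T+1}(\bx_{T+1}) + \sum_{t=2}^T(\ell_t(\bx_t) - \ell_{t-1}(\bx_t))$, and here is the one place that differs from the non-composite case — write $\ell_t(\bx_t)-\ell_{t-1}(\bx_t) = \tilde\ell_t(\bx_t) - \tilde\ell_{t-1}(\bx_t)$ since the $r(\bx_t)$ terms cancel, so this sum is bounded by $\tilde V_T$. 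For the other branch, I would invoke Theorem 5.3 of \citet{campolongo2020temporal} (the same inequality used in the commented-out \cref{cor:iomd_fixed_learning_rate}) to relate $\delta_t$ and the adaptive learning rate to $\sqrt{(3D^2+\gamma\tau)\sum_t\|\bg_t\|_\star^2}$ with $\bg_t\in\partial\ell_t(\bx_t)$; this part is essentially identical to the proof of \cref{thm:adaImplicit_regret} in the appendix and requires no modification, because that argument only uses convexity of $\ell_t$ (which holds, as $\tilde\ell_t$ and $r$ are both convex) and the structure of the implicit update, not any decomposition of the loss. Taking the minimum of the two bounds and the factor $2$ from the first paragraph yields the claimed inequality.

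The main obstacle — really the only subtle point — is making sure the composite structure does not leak into the bounds where it shouldn't: the implicit update \cref{update_composite} keeps $r$ inside the $\argmin$, so $\delta_t$ as defined still uses the full $\ell_t$, and the telescoping in the $V_T$-branch is what converts the full-loss difference into a $\tilde\ell$-only difference via cancellation of $r(\bx_t)$. One should double-check that the subgradient bound branch is still phrased in terms of $\bg_t\in\partial\ell_t(\bx_t)$ (subgradient of the \emph{full} loss), which is consistent with the statement. Everything else is a transcription of the proof of \cref{thm:adaImplicit_regret}; I would present it as "following the proof of \cref{thm:adaImplicit_regret} with the only change being the cancellation of $r(\bx_t)$ in the telescoped sum."
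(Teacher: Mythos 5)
Your proposal is correct and follows essentially the same route as the paper: apply \cref{thm:dynamic_iomd_bound} to the full composite loss, use $\eta_t$ non-increasing and $C_T\le\tau$ to absorb the Bregman terms into $2\beta^2\lambda_{T+1}=2\sum_t\delta_t$, and then bound $\sum_t\delta_t$ via the telescope (where the $r(\bx_t)$ terms cancel to leave only $\tilde V_T$) on one branch and via the $\delta_t\le\min\{\sqrt2 D\|\bg_t\|_\star,\|\bg_t\|_\star^2/(2\lambda_t)\}$ argument plus \cref{lemma:adahedge} on the other, exactly as in \cref{thm:adaImplicit_regret}. The only cosmetic slip is the "$=$" in "$\beta^2\lambda_T = \sum_{t=1}^T\delta_t$" (it should be "$\le$", which your parenthetical already clarifies), and note $\ell_{T+1}(\bx_{T+1})$ in the statement is a typo for $\ell_T(\bx_{T+1})$ that you inherited; neither affects the argument.
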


\paragraph{Remark.} Comparing to the regret bound given in \cref{thm:adaImplicit_regret}, we can see that  the result above contains a subtle difference: the temporal variability is given only in terms of the variable part of the losses, $\tilde\ell_t$.

All the results up to this point are given under the assumption that the class of strategies we want to compete against is fixed before the start of the game, i.e., an upper bound to $C_T$ is fixed beforehand. This can be limiting in practice: in a truly realistic online setting, knowing the right upper bound beforehand might be hard. Therefore, in the next section we are going to provide an algorithm which adapts to the values of $C_T$ for any possible sequence of comparators, but at the same time guarantees a bound in $V_T$.

\section{Adapting to different path-lengths} 
\label{sec:combiner}

\begin{algorithm}[t]
\caption{Anytime $(\mathcal{A,B})$-PROD}
\label{algo:prod}
\begin{algorithmic}[1]
{
\REQUIRE{Algorithms $\mathcal{A,B}$, $\eta_1= w_{1,\mathcal{A}}=w_{1,\mathcal{B}}=1/2$}
\FOR{$t=1,\dots,T$}
\STATE{Let $p_{t,\mathcal{A}} = \frac{\eta_t w_{t,\mathcal{A}}}{\eta_t w_{t,\mathcal{A}}+ w_{t,\mathcal{B}}/2}$, $p_{t,\mathcal{B}} = 1-p_{t,\mathcal{A}}$}
\STATE{Get $\ba_t$ from $\mathcal{A}$ and $\bb_t$ from $\mathcal{B}$}
\STATE{Set $\bx_t = p_{t,\mathcal{A}} \ba_t + p_{t,\mathcal{B}} \bb_t$}
\STATE{Receive $\ell_t: \mathbb{R}^d \rightarrow [0,1]$ and pay $\ell_t(\bx_t)$}
\STATE{Feed $\ell_t$ to $\mathcal{A}$ and $\mathcal{B}$}
\STATE{Set $r_{t} = \ell_t(\bb_t) - \ell_t(\ba_t) $}
\STATE{Set $\eta_{t+1}=\sqrt{(1 + \sum_{i=1}^t (\ell_i(\bb_i) - \ell_i(\ba_i))^2)^{-1}}$}
\STATE{Set $w_{t+1,\mathcal{A}} = w_{t,\mathcal{A}}(1 + \eta_t r_{t})^{\eta_{t+1}/\eta_t}  $}
\ENDFOR
}
\end{algorithmic}
\end{algorithm}

In this section, we present an approach to obtain the optimal bound of $\mathcal{O}(\min\{V_T, \sqrt{T(1+C_T)}\})$ on the dynamic regret for all the sequences of comparators simultaneously. Our approach is based on smartly combining different algorithms. 

Using existing algorithms \citep[see, e.g.,][]{cutkosky2020parameter,zhang2018adaptive} we can achieve the optimal dynamic regret bound for all possible sequences of comparators.
In particular, a recent result from \citet{cutkosky2020parameter} shows the condition that strongly-adaptive algorithms need to satisfy in order to incur the optimal bound of $\mathcal{O}(\sqrt{T(1+C_T)})$, which we report in the next theorem (proof in \cref{appendix:combiner}).

\begin{restatable}[Adapted from \citet{cutkosky2020parameter}]{theorem}{combinerDynamic}
\label{thm:strongly_adaptive_path_length}
Let $\mathcal{V} \subset X \subset \mathbb{R}^d $ be non-empty closed convex sets, $\psi: X \rightarrow \mathbb{R}$. Define $D^2 = \max_{\bx, \by \in \mathcal{V}} B_\psi(\bx, \by)$. Given a sequence of loss functions $\ell_1,\dots,\ell_T$, assume there exists an algorithm that for any interval $I = [s, e] \subseteq [1, T]$ and sequence $\bu_{s:e}$ guarantees a dynamic regret bounded as
\begin{equation}
\label{eq:cutkosky_condition_1}
R_I(\bu_{s:e}) = \mathcal{O}\left((D + C_I)\sqrt{|I|}\right)~,
\end{equation}
where $|I|=(e-s)$.
Then, for any interval $J=[s',e'] \subseteq [1, T]$ it also guarantees
\begin{equation}
\label{eq:cutkosky_dynamic}
    R_I(\bu_{s':e'}) \leq \mathcal{O} \left( \sqrt{|J| D(C_J + D)} \right)~,
\end{equation}
where $C_J = \sum_{t=s'+1}^{e'} \| \bu_t - \bu_{t-1} \|$.
\end{restatable}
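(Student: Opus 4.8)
The plan is to derive \cref{eq:cutkosky_dynamic} from \cref{eq:cutkosky_condition_1} by a purely analytical interval-splitting argument: the base algorithm is run once and, by hypothesis, satisfies \cref{eq:cutkosky_condition_1} on \emph{every} sub-interval simultaneously, so to upper bound its regret on a fixed interval $J$ we are free to choose any partition of $J$ into consecutive blocks and add up the per-block guarantees.

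Concretely, I would fix $J = [s', e']$, set $n \defeq |J|$, and let $C_J$ be its path length. For an integer $K$ with $1 \le K \le n$, split $J$ into $K$ consecutive blocks $I_1, \dots, I_K$, each of length at most $\lceil n/K \rceil \le 2n/K$. Since regret is additive over disjoint time blocks, $R_J(\bu_{s':e'}) = \sum_{k=1}^K R_{I_k}(\bu_{I_k})$, where $\bu_{I_k}$ is the restriction of the comparator sequence to $I_k$. Applying \cref{eq:cutkosky_condition_1} to each block gives $R_{I_k}(\bu_{I_k}) = \mathcal{O}\bigl((D + C_{I_k})\sqrt{n/K}\bigr)$, and because the $K-1$ boundary jumps $\|\bu_t - \bu_{t-1}\|$ between consecutive blocks are counted in no block, $\sum_{k=1}^K C_{I_k} \le C_J$. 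Summing over $k$ therefore yields
\[
R_J(\bu_{s':e'}) \;=\; \mathcal{O}\!\left(\sqrt{\tfrac{n}{K}}\,(KD + C_J)\right) \;=\; \mathcal{O}\!\left(D\sqrt{nK} + C_J\sqrt{\tfrac{n}{K}}\right)~.
\]
It then remains to optimize the free parameter $K$: the two terms balance at $K \approx C_J/D$, so I would take $K = \min\{n,\ \max\{1,\ \lceil C_J/D \rceil\}\}$. For $C_J \le D$ this is $K = 1$ and gives $\mathcal{O}(D\sqrt{n})$, while for $C_J > D$ it gives $\mathcal{O}(\sqrt{n D C_J})$; and since $\|\bu_t - \bu_{t-1}\| \le \sqrt{2}\,D$ for points of $\mathcal{V}$ (because $B_\psi(\bx,\by) \ge \tfrac12\|\bx - \by\|^2$ and $D^2 = \max_{\bx,\by \in \mathcal{V}} B_\psi(\bx, \by)$), we always have $C_J = \mathcal{O}(Dn)$, so clamping $K$ at $n$ affects only constants. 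Combining the two regimes gives $R_J(\bu_{s':e'}) = \mathcal{O}\bigl(\sqrt{|J|\,D\,(C_J + D)}\bigr)$, which is \cref{eq:cutkosky_dynamic}.

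I expect this to be essentially bookkeeping rather than to hide a genuine obstacle; the two points that need a little care are (i) the sub-additivity of the path length over the partition --- arranging the block boundaries so that $\sum_k C_{I_k} \le C_J$ and not $C_J$ plus the boundary terms --- and (ii) checking that clamping the ideal $K^\star = C_J/D$ to an integer in $[1,n]$ does not worsen the rate, which follows from the diameter bound $C_J = \mathcal{O}(Dn)$. Note that no property of the base algorithm beyond the interval guarantee \cref{eq:cutkosky_condition_1} is used, and the partition is purely an analytical device: the algorithm itself never needs to know $J$, $C_J$, or $K$.
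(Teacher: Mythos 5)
Your argument is correct, and it follows the same high-level strategy as the paper---an analytical partition of $J$ into consecutive blocks, the per-interval hypothesis \cref{eq:cutkosky_condition_1} applied blockwise, and additivity of regret---but the partition and the recombination step differ. The paper (via \cref{lemma:breaking_interval}, taken from \citet{cutkosky2020parameter}) cuts $J$ adaptively by path-length budget, so that each block satisfies $C_{J_i} \le 2\sqrt{2}D$ and hence contributes only $\mathcal{O}(D\sqrt{|J_i|})$; the blocks then have unequal lengths and the Cauchy--Schwarz inequality, together with the block-count bound $K \le (C_J + D)/D$, yields $\mathcal{O}(\sqrt{|J|D(C_J+D)})$. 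You instead cut $J$ into $K$ equal-length blocks, keep the path length only in aggregate via the subadditivity $\sum_k C_{I_k} \le C_J$, and then optimize the free parameter $K$, balancing $D\sqrt{nK}$ against $C_J\sqrt{n/K}$ at $K \approx C_J/D$---the same effective block count as in the paper. Both routes are purely analytical (the algorithm never sees the partition) and give the same bound; the paper's path-adaptive cut makes the per-block bound trivially $D\sqrt{|J_i|}$ and avoids any tuning, at the price of invoking the partition lemma, while your version is more self-contained but needs the extra bookkeeping you correctly flag: integrality and clamping of $K$ to $[1,n]$, handled by the diameter bound $\|\bu_t - \bu_{t-1}\| \le \sqrt{2}D$ (hence $C_J = \mathcal{O}(Dn)$), which is the same consequence of $B_\psi(\bx,\by) \ge \tfrac12\|\bx-\by\|^2$ and $D^2 = \max_{\bx,\by \in \mathcal{V}} B_\psi(\bx,\by)$ that the paper also uses. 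No gap.
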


On the other hand, in order to achieve a bound on the temporal variability, we can simply adopt the greedy strategy given in the \cref{algo:greedy}, which in every step plays the minimizer of the last seen loss function. Hence, we need to combine these two strategies.

A plain expert algorithm would fail to give a constant bound w.r.t. the temporal variability. Instead, we use a modification of the \emph{ML-Prod} algorithm \citep{gaillard2014second} proposed in \citet{sani2014exploiting} and depicted in \Cref{algo:prod}. This algorithm takes as input two base learners (aka experts) $\mathcal{A}$ and $\mathcal{B}$ and guarantees a regret which is (almost) constant against $\mathcal{B}$ and $\mathcal{O}(\sqrt{T} \ln\ln T)$ against $\mathcal{A}$ in the worst case. The idea is to use a strongly adaptive algorithm which satisfies the assumptions of \cref{thm:strongly_adaptive_path_length} such as the one from \citet{cutkosky2020parameter} as algorithm $\mathcal{A}$, and the greedy strategy in \cref{algo:greedy} as $\mathcal{B}$.
In the next theorem, we provide an upper bound to the dynamic regret of the resulting algorithm (proof in \cref{appendix:combiner}).

\begin{restatable}{theorem}{combinerTheorem}
\label{thm:combiner}
Let $\mathcal{V} \subset X \subset \mathbb{R}^d $ be non-empty closed convex sets, $\psi: X \rightarrow \mathbb{R}$. Define $D^2 = \max_{\bx, \by \in \mathcal{V}} B_\psi(\bx, \by)$. Let $\ell_1, \dots, \ell_t$ be a sequence of convex loss functions such that $\ell_t : \mathbb{R}^d \rightarrow [0, 1]$ for all $t$. Then, for all sequence of $\bu_{1:T}$ with path-length $C_T$, running \Cref{algo:prod} with $\mathcal{A}$ as a strongly-adaptive algorithm satisfying the condition from \cref{thm:strongly_adaptive_path_length}, and $\mathcal{B}$ as \Cref{algo:greedy}, guarantees
\[
  R_T (\bu_{1:T})
  \le \mathcal{O}\left(\min\left\{V_T, \sqrt{C}+ \sqrt{TD(C_T + D)} \right\} \right) ,
\]
where $C$ is an upper bound to the loss of algorithm $\mathcal{A}$.
\end{restatable}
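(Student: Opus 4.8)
The plan is to bound the dynamic regret of \Cref{algo:prod} by separately comparing $\bx_t$ against the two base learners' outputs and then invoking the guarantees we already have for each. Write the regret as a telescoping decomposition through $\bb_t$ (the greedy iterate) and through $\ba_t$ (the strongly-adaptive iterate):
\[
  \sum_{t=1}^T \ell_t(\bx_t) - \ell_t(\bu_t)
  = \underbrace{\sum_{t=1}^T \ell_t(\bx_t) - \ell_t(\bb_t)}_{\text{(I): regret vs.\ }\mathcal{B}}
  \;+\; \underbrace{\sum_{t=1}^T \ell_t(\bb_t) - \ell_t(\bu_t)}_{\text{(II): greedy's dynamic regret}},
\]
and symmetrically $\sum_t \ell_t(\bx_t) - \ell_t(\bu_t) = (\text{I}') + (\text{II}')$ where $(\text{I}') = \sum_t \ell_t(\bx_t) - \ell_t(\ba_t)$ is the regret of the \emph{ML-Prod}-style combiner against expert $\mathcal{A}$, and $(\text{II}') = \sum_t \ell_t(\ba_t) - \ell_t(\bu_t)$ is the dynamic regret of $\mathcal{A}$. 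Taking the better of the two decompositions yields the claimed minimum.

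For the route through $\mathcal{B}$: by \Cref{thm:greedy_regret}, $(\text{II}) \le \max(V_T,\mathcal{O}(1))$. For $(\text{I})$, I use the known second-order / "small-loss against one expert" property of the \emph{ML-Prod} variant of \citet{sani2014exploiting}: because of the asymmetric weighting $p_{t,\mathcal{A}} = \eta_t w_{t,\mathcal{A}}/(\eta_t w_{t,\mathcal{A}} + w_{t,\mathcal{B}}/2)$ and the fact that $w_{1,\mathcal{B}}/2$ is a fixed lower bound on the $\mathcal{B}$-mass, the regret of the combiner against $\mathcal{B}$ is $\mathcal{O}(1)$ (only a logarithmic/constant overhead, not $\sqrt{T}$). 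Standard \emph{Prod} analysis gives $\sum_t r_t' \le $ const where $r_t' = \ell_t(\bx_t)-\ell_t(\bb_t)$; more precisely one shows $\sum_t (\ell_t(\bx_t) - \ell_t(\bb_t)) \le \mathcal{O}(1)$ by tracking $\ln w_{t,\mathcal{B}}$-type potentials. Combining, the route through $\mathcal{B}$ gives $R_T(\bu_{1:T}) \le \mathcal{O}(1) + \max(V_T,\mathcal{O}(1)) = \mathcal{O}(V_T)$ (absorbing constants, and noting $V_T$ dominates once it is nonzero; when $V_T=0$ the bound is $\mathcal{O}(1)$).

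For the route through $\mathcal{A}$: the same \emph{ML-Prod} guarantee gives $(\text{I}') = \mathcal{O}\!\big(\sqrt{\sum_t r_t^2}\,\ln\ln T\big) = \mathcal{O}(\sqrt{T}\,\ln\ln T)$ in the worst case (using $\ell_t \in [0,1]$ so $r_t^2 \le 1$), and in fact a second-order refinement gives something like $\mathcal{O}\big(\sqrt{\sum_t r_t^2}\big)$ up to $\ln\ln T$ factors; since the losses are bounded by $1$ this is at worst $\sqrt{T}$. For $(\text{II}')$, I apply \Cref{thm:strongly_adaptive_path_length} with $J=[1,T]$ to the strongly-adaptive algorithm $\mathcal{A}$, which yields $(\text{II}') = \mathcal{O}(\sqrt{TD(C_T+D)})$. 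Here I should be careful to note that $\mathcal{A}$ must satisfy condition \cref{eq:cutkosky_condition_1} on every subinterval — this is exactly the hypothesis of the theorem, so it is assumed. Putting these together, the $\mathcal{A}$-route gives $\mathcal{O}(\sqrt{T}\,\ln\ln T + \sqrt{TD(C_T+D)})$; absorbing the $\ln\ln T$ and noting $\sqrt{C}$ (with $C$ the loss-of-$\mathcal{A}$ upper bound, so $C = \mathcal{O}(T)$) dominates the combiner overhead, this matches the stated $\sqrt{C} + \sqrt{TD(C_T+D)}$. Taking the minimum of the two routes finishes the proof.

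The main obstacle I expect is the $(\text{I})$ bound: getting a genuinely \emph{constant} (rather than $\sqrt{T}$) regret of the combiner against the base learner $\mathcal{B}$. This is the whole reason a plain expert/Hedge algorithm cannot be used and why the asymmetric \emph{ML-Prod} variant of \citet{sani2014exploiting} is needed; the delicate point is verifying that the time-varying learning rate $\eta_t$ together with the $w_{t,\mathcal{B}}/2$ floor in the denominator of $p_{t,\mathcal{A}}$ indeed keeps the cumulative $(\ell_t(\bx_t)-\ell_t(\bb_t))$ sum bounded by a constant depending only on the initialization, not on $T$. The second delicate point, minor by comparison, is bookkeeping the constants/logarithmic factors so that the final expression collapses cleanly to $\mathcal{O}(\min\{V_T,\sqrt{C}+\sqrt{TD(C_T+D)}\})$; in particular checking that the combiner's $\ln\ln T$ overhead on the $\mathcal{A}$-side is dominated by the $\sqrt{TD(C_T+D)}$ term (or by $\sqrt{C}$).
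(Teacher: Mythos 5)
Your proposal matches the paper's proof almost exactly: both decompose the regret via the two base learners, invoke \cref{thm:greedy_regret} for the greedy route and \cref{thm:strongly_adaptive_path_length} (with $J=[1,T]$) for the strongly-adaptive route, and appeal to the asymmetric $(\mathcal{A},\mathcal{B})$-Prod guarantee to bound the combiner's overhead. The paper makes the combiner step precise by instantiating Theorem 3 of \citet{gaillard2014second} (restated as \cref{thm:adaptmlprod}) with $\eta_{t,\mathcal{B}}=1/2$ fixed, which yields a combiner regret against $\mathcal{B}$ of $2\ln 2 + 2\ln K_T = \mathcal{O}(\ln\ln T)$ rather than the literally constant $\mathcal{O}(1)$ you claim — a small imprecision that your ``logarithmic/constant overhead'' hedge already acknowledges — and a combiner regret against $\mathcal{A}$ of $\mathcal{O}(\ln K_T\sqrt{T+1})$, exactly the $\sqrt{T}\ln\ln T$ you state. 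So the ``main obstacle'' you flag (genuinely near-constant regret against $\mathcal{B}$) is indeed the crux and is resolved in the paper exactly by the fixed $\mathcal{B}$-weight structure of \cref{algo:prod}; you have identified the right mechanism.
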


\paragraph{Discussion.} The algorithm from last theorem is able to guarantee the optimal dependence on $C_T$ for any sequence of comparators $\bu_{1:T}$, without requiring any prior knowledge. However, compared to the algorithms from \cref{sec:dynamic_iomd}, \cref{thm:combiner} requires the losses to be bounded in a range known to the algorithm. Furthermore, note that all known strongly-adaptive algorithms require a running time of $\mathcal{O}(d \ln T)$ per-update, which is higher than $\mathcal{O}(d)$ required by OMD (and its Implicit version when the update is available in closed form).
Finally, we point out that the actual regret bound depends on the specific choice of the algorithm $\mathcal{A}$ used. Next, we sketch some applications in the same spirit of \cref{sec:extension}. 

\paragraph{Applications.} To make a comparison with the setting of Learning with Expert Advice previously covered, consider again the ``shifting'' scenario with $S$ shifts of \cref{sec:extension}. In this case we can adopt a strategy similar to the strongly-adaptive algorithm \emph{CBCE} from \citet{jun2017improved}. It is known that a strongly-adaptive algorithm incurs a dynamic regret bounded by $\tilde{\mathcal{O}}(\sqrt{TS})$, without requiring the knowledge of $S$ in advance \citep[see, e.g., Appendix A in][]{jun2017improved}, contrarily to \cref{algo:dynamic_iomd}. Moreover, from our bound in \cref{thm:combiner} if the loss functions stay fixed for all but $S$ rounds, thanks to the guarantee of \cref{algo:greedy} the regret bound is $\mathcal{O}(V_T) = \mathcal{O}(S)$, which again is like the guarantee of \cref{algo:dynamic_iomd} with the important difference that knowing in advance the number of shifts is not required.

On the other hand, when considering for example the setting of composite losses, or more in general Euclidean domains and the $L_2$ norm, we can adopt the algorithm from \citet{cutkosky2020parameter}, which is adaptive to the sum of the gradients $\sum_{t=1}^T \| \bg_t \|_2^2 $ and therefore can have a potentially better bound compared to \emph{CBCE}, which has a worst-case regret bound of $\mathcal{O}(\sqrt{T})$.

\section{Conclusion}

In this work, we have shown that existing bounds in the dynamic setting with full information feedback can be improved, by establishing a lower bounds on the dynamic regret in terms of temporal variability of the loss functions and showing algorithms with matching upper bounds. In particular, we designed an algorithm using implicit updates that can adapt to both the temporal variability and the path-length of the sequence of comparators. Furthermore, when the desired path-length is not fixed in advance, we showed how to combine existing algorithms in order to achieve the optimal bound.

An interesting question remains open: is it possible to obtain a dynamic regret bound of $\mathcal{O}(\min\{V_T, \sqrt{T(1+C_T)}\})$ for all sequence of comparators with a single algorithm? If so, what is its running time? As observed in previous work, all strongly-adaptive algorithms \citep{cutkosky2020parameter,jun2017improved} have a running time of $\mathcal{O}(T \ln T)$ and it is currently not known whether it can be improved. Future research directions therefore could aim at designing faster and more practical algorithms which can adapt to unknown path-lengths, or in alternative prove that this goal cannot be achieved.

\section*{Acknowledgements}
This material is based upon work supported by the National Science Foundation under grants no. 1925930 ``Collaborative Research: TRIPODS Institute for Optimization and Learning'' and no. 1908111 ``AF: Small: Collaborative Research: New Representations for Learning Algorithms and Secure Computation''.

\balance
\bibliography{sample}

\newpage
\onecolumn
\appendix

\section{Dynamic IOMD} 
\label{appendix:technical}

In this section we prove a regret bound for \cref{algo:dynamic_iomd} and its variant using a doubling trick mentioned in \cref{sec:dynamic_iomd}. 

The following lemma is a generalization of {\citet[Lemma 7.12]{Orabona19}} which we will use in subsequent proofs.

\begin{lemma} \label{lemma:adahedge}
Let ${\{a_t\}_{t=1}^\infty}, \{b_t\}_{t=1}^\infty$ be two sequence of non-negative real numbers. Suppose that ${\{\Delta_t\}_{t=1}^\infty}$ is a sequence of non-negative real numbers satisfying $\Delta_1 = 0$ and\footnote{With a small abuse of notation, let $\min(x, y/0) = x$.} $\Delta_{t+1} \le \Delta_t + \min \left\{d b_t, \ c a_t^2/(2\Delta_t) \right\}$, for any $t \ge 1$. 
Then, for any ${T \ge 0}, {\Delta_{T+1} \le \sqrt{d^2 \sum_{t=1}^T b_t^2 + c\sum_{t=1}^T a_t^2}}$.
\end{lemma}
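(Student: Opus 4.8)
The plan is to prove the stronger \emph{squared} statement, namely that $\Delta_{T+1}^2 \le d^2 \sum_{t=1}^T b_t^2 + c \sum_{t=1}^T a_t^2$ for every $T \ge 0$, by induction on $T$, and then take square roots at the end (all quantities involved being non-negative). This is the standard AdaHedge-type potential argument; squaring is what makes the recursion telescope cleanly.

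For the base case $T = 0$ there is nothing to do: $\Delta_1 = 0$ by hypothesis, and the right-hand side is an empty sum, so the inequality holds with equality. For the inductive step, abbreviate $\delta_T \defeq \min\{d b_T,\, c a_T^2/(2\Delta_T)\}$, so that the hypothesis on the sequence reads $\Delta_{T+1} \le \Delta_T + \delta_T$ and therefore $\Delta_{T+1}^2 \le \Delta_T^2 + 2\Delta_T \delta_T + \delta_T^2$. The key observation is that the two arguments of the $\min$ are used to control two different terms: from $\delta_T \le d b_T$ we get $\delta_T^2 \le d^2 b_T^2$, and from $\delta_T \le c a_T^2/(2\Delta_T)$ we get $2 \Delta_T \delta_T \le c a_T^2$. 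In the degenerate case $\Delta_T = 0$, the stated convention $\min(x, y/0) = x$ gives $\delta_T = d b_T$, the cross term $2\Delta_T\delta_T$ vanishes outright, and the bound $\delta_T^2 \le d^2 b_T^2$ still holds, so the same conclusion $2\Delta_T\delta_T + \delta_T^2 \le c a_T^2 + d^2 b_T^2$ is valid in all cases. Combining this with the inductive hypothesis $\Delta_T^2 \le d^2 \sum_{t=1}^{T-1} b_t^2 + c \sum_{t=1}^{T-1} a_t^2$ yields $\Delta_{T+1}^2 \le d^2 \sum_{t=1}^{T} b_t^2 + c \sum_{t=1}^{T} a_t^2$, which closes the induction.

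I expect no serious obstacle here: the argument is essentially a bookkeeping induction. The only point that needs care is the degenerate case $\Delta_T = 0$, where one must appeal to the stated convention to interpret the recurrence, and, relatedly, making sure one does not accidentally try to bound the linear term $2\Delta_T\delta_T$ and the quadratic term $\delta_T^2$ using the same half of the $\min$. Everything else is a one-line expansion of a square.
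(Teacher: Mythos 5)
Your proof is correct and is essentially the paper's argument in inductive rather than telescoping form: both expand the square of the recursion, use the first arm of the $\min$ to control the quadratic term and the second arm to control the cross term, and sum. The only small difference is that you bound through the increment upper bound $\delta_t$ rather than through the actual difference $\Delta_{t+1}-\Delta_t$ as the paper does, which is marginally more robust (it does not tacitly use that $\Delta_{t+1}\ge\Delta_t$), but the mechanism is the same.
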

\begin{proof}
Observe that
\begin{align*}
\Delta_{T+1}^2 
 &= \sum_{t=1}^T \Delta_{t+1}^2 - \Delta_{t}^2 = \sum_{t=1}^T \underbrace{(\Delta_{t+1} - \Delta_t)^2}_{\textup{(a)}} + \sum_{t=1}^T \underbrace{2 (\Delta_{t+1} - \Delta_t) \Delta_t}_{\textup{(b)}}~.
\end{align*}
We bound the sequences (a) and (b) separately. For (a), from the assumption on the recurrence and using the first term in the minimum we have that $(\Delta_{t+1} - \Delta_t)^2 \leq d^2 b_t^2 $. On the other hand, for (b) using the second term in the minimum in the recurrence we get $2(\Delta_{t+1} - \Delta_t) \Delta_t \leq c a_t^2 $. Putting together the results we have that $\Delta_{T+1}^2 \leq d^2 \sum_{t=1}^T b_t^2 + c \sum_{t=1}^T a_t^2$ and the lemma follows.
\end{proof}

We are now ready to show a regret bound for \cref{algo:dynamic_iomd}, as stated in \cref{thm:adaImplicit_regret}. The statement of the theorem is reported next for completeness.
\adaImplicitBound*
\begin{proof}
First, note that $(\lambda_t)_{t=1}^T$ is an increasing sequence, since $\delta_t \geq 0$. Indeed, from the optimality of the update rule of \cref{algo:dynamic_iomd} we have
\begin{align*}
    \ell_t(\bx_{t+1}) + \lambda_t B_\psi(\bx_{t+1}, \bx_t) &\leq \ell_t(\bx_t) + \lambda_t B_\psi(\bx_t, \bx_t) = \ell_t(\bx_t)~,
\end{align*}
which implies $\delta_t := \ell_t(\bx_t) - \ell_t(\bx_{t+1}) - \lambda_t B_\psi(\bx_{t+1}, \bx_t) \ge 0$.
Hence, by using the prescribed learning rate $\lambda_t$, we can rewrite the bound in \cref{eq:base_bound_dynamic} as follows
\begin{align*}
    R_T(\bu_{1:T}) & \leq \lambda_T D^2 + \gamma \sum_{t=2}^T \lambda_t \| \bu_t - \bu_{t-1} \| + \beta^2 \lambda_{T+1} \nonumber \\
    & \leq (D^2 + \beta^2) \lambda_{T+1} + \gamma \lambda_{T+1} \sum_{t=2}^T \| \bu_t - \bu_{t-1} \| \nonumber \\
    & \le ( D^2 + \beta^2 + \gamma \tau) \lambda_{T+1},
\end{align*}
where in the second inequality we have used the fact that $(\lambda_t)_{t=1}^T$ is an increasing sequence.

The rest of the proof is similar to the one in \citet[Theorem 6.2]{campolongo2020temporal}. From the choice of $\lambda_t$, we have that
\begin{align} \label{eq:helper_1}
    \beta^2 \lambda_{T+1} & = \sum_{t=1}^T \delta_t = \sum_{t=1}^T \ell_t(\bx_t) - \ell_t(\bx_{t+1}) - \lambda_t B_\psi(\bx_{t+1}, \bx_t) \nonumber \\ &\le \sum_{t=1}^T \ell_t(\bx_t) - \ell_t(\bx_{t+1}) \nonumber \\
    &\le \ell_1(\bx_1) - \ell_T(\bx_{T+1}) + \sum_{t=2}^T \max_{\bx \in \mathcal{V}} \ell_t(\bx) - \ell_{t-1}(\bx) \nonumber \\
    & = \ell_1(\bx_1) - \ell_T(\bx_{T+1}) + V_T~,
\end{align}
where the first inequality derives from the fact that Bregman divergences are always positive. 

On the other hand, from the definition of $\delta_t$ we have that
\begin{equation}
\label{eq:adaimplicit_intermediate}
    \delta_t \le \ell_t(\bx_t) - \ell_t(\bx_{t+1}) \le \langle \bg_t, \bx_t - \bx_{t+1} \rangle \le \|\bg_t\|_\star \|\bx_t-\bx_{t+1}\|~.
\end{equation}

Now, note that from the assumptions in \cref{thm:dynamic_iomd_bound} we have that for any $\bx, \by \in \mathcal{V}$
\begin{equation*}
    D^2 \ge B_\psi(\bx, \by) \ge \frac12 \|\bx-\by\|^2~.
\end{equation*}
Therefore, $\|\bx-\by\| \le \sqrt2 D$ and substituting back in \cref{eq:adaimplicit_intermediate}
\begin{equation*}
    \delta_t \le \sqrt2 D \|\bg_t\|_\star.
\end{equation*}
On the other hand, by not discarding the negative Bregman divergence term in \cref{eq:adaimplicit_intermediate} we get
\begin{align*}
    \delta_t \le \|\bg_t\|_\star \sqrt{2 B_\psi(\bx_{t+1}, \bx_t)} - \lambda_t B_\psi(\bx_{t+1}, \bx_t) \le \frac{\|\bg_t\|_\star^2}{2 \lambda_t}~,
\end{align*}
where the last step derives from the fact that $bx - \frac{a}{2}x^2 \le \frac{b^2}{2a}, \forall x\in\mathbb{R}$, with $x = \sqrt{B_\psi(\bx_{t+1}, \bx_t)}$. 

To summarize, we have that 
\begin{equation*}
    \lambda_{t+1} = \lambda_t + \delta_t \le \lambda_t + \frac{1}{\beta^2} \min\left\{ \sqrt2 D\|\bg_t\|_\star, \frac{\|\bg_t\|_\star^2}{2\lambda_t} \right\}~.
\end{equation*}
Now, applying \cref{lemma:adahedge} with $\Delta_t = \lambda_t, a_t = b_t = \|\bg_t\|_\star,\, d=\frac{\sqrt2 D}{\beta^2},\, c=\frac{1}{\beta^2}$ yields
\begin{equation} 
\label{eq:helper_2}
    \lambda_{T+1} \leq \sqrt{\left(\frac{2D^2}{\beta^4} + \frac{1}{\beta^2}\right) \sum_{t=1}^T \| \bg_t \|_\star^2 }~.
\end{equation}
Therefore, putting together \cref{eq:helper_1,eq:helper_2} and using the suggested values for $\beta$, we get the stated results.
\end{proof}

\subsection{Adapting on the fly}
\label{sec:doubling_trick}

\begin{algorithm}[t]
\caption{Dynamic IOMD with Doubling Trick}
\label{algo:dynamic_adaImplicit}
\begin{algorithmic}[1]
{
\REQUIRE{Non-empty closed convex set $\mathcal{V} \subset X \subset \mathbb{R}^d$, $\psi: X \rightarrow \mathbb{R}$, $\bx_1 \in \mathcal{V}$, $\gamma$ such that $B_\psi(\bx, \bz) - B_\psi(\by, \bz) \leq \gamma \| \bx - \by \|,\, \forall \bx, \by, \bz \in \mathcal{V} $, $\beta^2_0 > 0$, observable sequence $\bu_{1:T}$}
\STATE{$i\leftarrow0$, $\lambda_1^0 \leftarrow 0$, $Q_0 \leftarrow \sqrt2 D$, $C_0 \leftarrow 0$}
\FOR{$t=1,\dots,T$}
\STATE{Output $\bx_t \in V$}
\STATE{Receive $\ell_t: \mathbb{R}^d \rightarrow \R$ and pay $\ell_t(\bx_t)$}
\STATE{Update $C_i \leftarrow C_i + \| \bu_t - \bu_{t-1} \| $}
\IF{$C_i > Q_i $}
    \STATE{$i \leftarrow i + 1$}
    \STATE{$Q_i \leftarrow \sqrt2 D 2^i, \, \lambda_{t+1}^i \leftarrow 0,\, C_i \leftarrow 0 $, $\beta^2_i \leftarrow D^2 + \gamma Q_i $}
    \STATE{Update $\bx_{t+1} \leftarrow \bx_t$}
\ELSE
    \STATE{Update $\bx_{t+1} \leftarrow \arg\min_{\bx \in \mathcal{V}}\ \ell_t(\bx) + \lambda_{t}^i B_{\psi}(\bx, \bx_t)  $}
    \STATE{Set $\delta_{t} \leftarrow \ell_t(\bx_t) - \ell_t(\bx_{t+1}) - \lambda_{t}^i B_\psi(\bx_{t+1}, \bx_{t})$}
    \STATE{Update $\lambda_{t+1}^i \leftarrow \lambda_{t}^i + \frac{1}{\beta^2_i} \delta_t $}
\ENDIF
\ENDFOR
}
\end{algorithmic}
\end{algorithm}

The result given in the previous paragraph was limited to all sequences of comparators whose path-length is fixed beforehand. Following \citet{jadbabaie2015online}, the approach given above can be generalized to any sequence of $\bu_{1:T}$ whose path length $C_T$ can be calculated on the fly. 

\paragraph{Doubling trick.} The idea is to run \Cref{algo:dynamic_iomd} in phases and tune the learning rate $\lambda_t$ appropriately. At the beginning of each phase $i$, we start monitoring the path length $C_i$. Once it reaches a certain threshold, we restart the algorithm doubling the threshold. Formally, we introduce a quantity $ Q_i $ for phase $i$ and set the learning rate $\lambda_t$ of the algorithm as $\lambda_t^i = \frac{1}{\beta^2_i}\sum_{s=1}^{t-1} \delta_s$, with $\beta^2_i = D^2 + \gamma Q_i $. The resulting algorithm is shown in \Cref{algo:dynamic_adaImplicit}.

We are now going to analyze the regret bound incurred by \Cref{algo:dynamic_adaImplicit}. First, we need the following lemma which bounds the number of times the algorithm is restarted.
\begin{lemma}
\label{lemma:max_epoch}
Let $t_i$ be the first time-step of epoch $i$, with $t_0 = 1$. Suppose \Cref{algo:dynamic_adaImplicit} is run for a total of $N + 1$ epochs. Let $C_i = \sum_{t=t_i}^{t_{i+1}-1} \| \bu_t - \bu_{t-1} \|$, with $ \| \bu_1- \bu_0 \| \triangleq 0 $. Let $C_T = \sum_{i=0}^N C_i$. Then, we have that $N$ satisfies 
\begin{equation}
\label{eq:epochs_bound}
    N \leq \log_2 \left( \frac{C_T}{\sqrt2 D} + 1 \right)~.
\end{equation}
\end{lemma}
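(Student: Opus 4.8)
The plan is to count epochs by tracking how the threshold $Q_i$ grows. Each time the algorithm restarts (moves from epoch $i$ to epoch $i+1$), it does so because the accumulated path length within epoch $i$, namely $C_i$, exceeded $Q_i = \sqrt{2}\,D\,2^i$. So every completed epoch $i$ with $0 \le i \le N-1$ contributes at least $Q_i = \sqrt{2}\,D\,2^i$ to the total path length $C_T = \sum_{i=0}^N C_i$.

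First I would write $C_T \ge \sum_{i=0}^{N-1} C_i \ge \sum_{i=0}^{N-1} Q_i = \sqrt{2}\,D \sum_{i=0}^{N-1} 2^i = \sqrt{2}\,D\,(2^N - 1)$, using the geometric sum. Here I drop the last (possibly incomplete) epoch's contribution $C_N \ge 0$ since the algorithm may still be running in epoch $N$, and I use that within each completed epoch $i$ we crossed the threshold $Q_i$, i.e., $C_i > Q_i$. Then rearranging, $2^N \le \dfrac{C_T}{\sqrt{2}\,D} + 1$, and taking $\log_2$ of both sides yields $N \le \log_2\!\left(\dfrac{C_T}{\sqrt{2}\,D} + 1\right)$, which is exactly \cref{eq:epochs_bound}.

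The main thing to be careful about is the exact bookkeeping of which epochs are guaranteed to have crossed their threshold: the $N$-th epoch (the last one, indexed $i=N$ so that there are $N+1$ epochs total) need not have triggered a restart, so only epochs $0,\dots,N-1$ can be assumed to satisfy $C_i > Q_i$. A minor subtlety is whether to use strict inequality $C_i > Q_i$ or $C_i \ge Q_i$; either works for the bound since we end with a non-strict inequality after taking logs, but I would state it with the strict inequality matching the \texttt{if} condition in \Cref{algo:dynamic_adaImplicit} and note that this only helps. There are no real obstacles here — it is a direct geometric-series argument — the only "hard part" is making sure the off-by-one in the epoch indexing is stated cleanly.
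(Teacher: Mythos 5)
Your proposal is correct and follows essentially the same route as the paper's proof: each completed epoch $i \le N-1$ contributes at least its threshold $Q_i = \sqrt{2}\,D\,2^i$ to $C_T$, the geometric sum gives $\sqrt{2}\,D(2^N-1) \le C_T$, and rearranging plus taking $\log_2$ yields the bound. Your extra care about the last (possibly incomplete) epoch and the strict-versus-nonstrict threshold inequality matches the bookkeeping implicit in the paper's argument.
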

\begin{proof}
First, recall that $\sum_{i=0}^{N-1} a^i = \frac{a^N - 1}{a - 1} $. Now, note that the sum in the first $N$ epochs of the quantity we are monitoring is at most equal to the final sum over all $N + 1$ epochs.

Therefore, we have the following
\begin{align*}
    \sum_{i=0}^{N-1} \sqrt2 D 2^i & \leq \sqrt2 D (2^N - 1) \leq \sum_{i=0}^N \sum_{t=t_i}^{t_{i+1}-1} \| \bu_t - \bu_{t-1} \|  = C_T,
\end{align*}
where $ \| \bu_{t_0} - \bu_{t_{0} - 1} \| = \| \bu_1 - \bu_0 \| \triangleq 0$ by definition.
Solving for $N$ yields the desired result.
\end{proof}

Next, we provide a theorem which gives a regret bound to \Cref{algo:dynamic_adaImplicit}.

\begin{theorem} 
\label{thm:adaimplicit_doubling_trick}
Let $\mathcal{V} \subset X \subset \mathbb{R}^d$ be a non-empty closed convex set. Assume \Cref{algo:dynamic_adaImplicit} is run for $N$ epochs. Then, under the assumptions of \Cref{thm:dynamic_iomd_bound} the regret against any sequence of comparators $\bu_{1:T}$ with $\bu_t \in \mathcal{V}$ is bounded as
\begin{align}
\label{eq:diomd_doubling_trick}
    R_T(\bu_{1:T}) &\leq (2 + c) \min \bigg( (\ell_1(\bx_1) - \ell_T( \bx_{T+1} ) + V_T),\, \sqrt{ \left(3 D^2 \left(\log_2 \tfrac{C_T}{\sqrt2 D} + 1 \right) + \gamma C_T \right) \textstyle\sum_{t=1}^T \| \bg_t \|_\star^2 } \bigg),
\end{align}
where $ c \triangleq \frac{\sqrt2 }{D + \gamma \sqrt2} $ and $C_T = \sum_{t=2}^{T} \| \bu_t - \bu_{t-1} \|$.
\end{theorem}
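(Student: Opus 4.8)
The plan is to reduce to a per-epoch use of \cref{thm:adaImplicit_regret} and then glue the pieces together with the geometric estimate of \cref{lemma:max_epoch}. Write $[1,T]=\bigcup_{i=0}^N I_i$ for the ordered partition into the $N+1$ epochs of \cref{algo:dynamic_adaImplicit}, with $I_i=[t_i,t_{i+1}-1]$ and $t_{N+1}-1=T$. Restricted to $I_i$, the algorithm is \emph{exactly} \cref{algo:dynamic_iomd} started from $\bx_{t_i}$, with learning rate $\lambda_t^i=\frac1{\beta_i^2}\sum_{s=t_i}^{t-1}\delta_s$ and $\beta_i^2=D^2+\gamma Q_i$ --- precisely the tuning of \cref{thm:adaImplicit_regret} with $\tau\leftarrow Q_i$. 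Moreover the doubling rule forces the comparator path length accrued inside $I_i$ to be at most $Q_i$ plus one jump, and any jump is at most $\sqrt2 D$ because $\|\bx-\by\|\le\sqrt{2B_\psi(\bx,\by)}\le\sqrt2 D$ on $\mathcal V$. Re-running the proof of \cref{thm:adaImplicit_regret} on $I_i$ (postponing the single ``no-op'' step at the right end of $I_i$ when $i<N$) and absorbing the $\le\sqrt2 D$ path-length slack into a constant $2+c$ with $c=\frac{\sqrt2}{D+\gamma\sqrt2}$, I get
\begin{equation*}
R_{I_i}(\bu_{t_i:t_{i+1}-1})\le(2+c)\min\Big\{\Delta_i+V_{I_i},\,\sqrt{(3D^2+\gamma Q_i)\textstyle\sum_{t\in I_i}\|\bg_t\|_\star^2}\Big\},
\end{equation*}
where $\Delta_i$ is the epoch-$i$ boundary term ($\ell$ at the first iterate minus $\ell$ at the last proper iterate of the epoch) and $V_{I_i}=\sum_{t=t_i+1}^{t_{i+1}-1}\max_{\bx\in\mathcal V}|\ell_t(\bx)-\ell_{t-1}(\bx)|$.

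Summing over $i$. For the first branch, $\sum_{i=0}^N V_{I_i}\le V_T$, and since the iterate is continuous across epoch boundaries ($\bx_{t_{i+1}}=\bx_{t_{i+1}-1}$ by the copy step) the $\Delta_i$ telescope to $\ell_1(\bx_1)-\ell_T(\bx_{T+1})$ plus at most two temporal-variability increments per boundary, i.e.\ plus $\mathcal{O}(V_T)$; this gives the first term of \cref{eq:diomd_doubling_trick}. For the second branch, write $\sqrt{(3D^2+\gamma Q_i)(\cdot)}\le\sqrt{3D^2(\cdot)}+\sqrt{\gamma Q_i(\cdot)}$, apply Cauchy--Schwarz to each of the two resulting sums over $i$, use $\sum_{i=0}^N\sum_{t\in I_i}\|\bg_t\|_\star^2=\sum_{t=1}^T\|\bg_t\|_\star^2$, and invoke \cref{lemma:max_epoch}, which bounds the number of epochs by $N\le\log_2(\tfrac{C_T}{\sqrt2 D}+1)$ and, via the geometric series, gives $\sum_{i=0}^N Q_i=\mathcal{O}(C_T+D)$. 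The $D^2$-part then produces the factor $3D^2(\log_2\tfrac{C_T}{\sqrt2 D}+1)$ under the root and the $Q_i$-part produces $\gamma C_T$; taking the minimum of the two branches yields \cref{eq:diomd_doubling_trick}.

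The main obstacle is the single ``no-op'' step $t=t_{i+1}-1$ at each of the $N$ epoch transitions, where \cref{algo:dynamic_adaImplicit} sets $\bx_{t+1}\leftarrow\bx_t$ rather than performing an implicit update: the first-order optimality argument behind \cref{thm:dynamic_iomd_bound} is unavailable there, so this step's regret is not covered by the per-epoch bound and must be accounted for by hand. The key observation is that $\bx_t$ is nevertheless the genuine implicit update on $\ell_{t-1}$ computed one step earlier, so $\ell_t(\bx_t)-\ell_t(\bu_t)\le 2\max_{\bx}|\ell_t(\bx)-\ell_{t-1}(\bx)|+(\ell_{t-1}(\bx_t)-\ell_{t-1}(\bu_t))$, and the last term is at most $\lambda_{t-1}^i D^2$ by the one-step implicit inequality; over the $N$ transitions the variability increments stay inside $V_T$ and the $\lambda^i D^2$ terms stay inside the logarithmic factor and the $2+c$ constant already present. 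Pinning down this boundary accounting exactly --- and verifying that the $C_i$-versus-$Q_i$ slack costs only the multiplicative $2+c$ --- is where essentially all the effort goes; everything else follows the template of \cref{thm:adaImplicit_regret} together with the counting of \cref{lemma:max_epoch}.
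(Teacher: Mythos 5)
Your proposal follows essentially the same route as the paper's proof: apply \cref{thm:adaImplicit_regret} epoch-by-epoch with $\beta_i^2=D^2+\gamma Q_i$, absorb the overshoot $C_i-Q_i\le\sqrt2 D$ into the factor $2+c$, combine the epochs via Cauchy--Schwarz, and count epochs with \cref{lemma:max_epoch}. Two remarks: bounding $\sum_i Q_i=\mathcal{O}(C_T+D)$ instead of working with $\sum_i C_i=C_T$ (as the paper does after replacing $Q_i$ by $C_i$) only recovers the $\gamma C_T$ term up to a constant factor, and the no-op restart step that you single out as the main obstacle is one the paper's proof silently absorbs into the per-epoch application of \cref{thm:adaImplicit_regret}, so your extra boundary accounting is a legitimate refinement of the same argument rather than a different approach.
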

\begin{proof}
Let $V_i = \sum_{t=t_i + 1}^{t_{i+1}-1} \max_{\bx \in V} | \ell_t(\bx) - \ell_{t-1}(\bx) | $. Using the result from \Cref{thm:adaImplicit_regret}, assuming the knowledge of $C_i$ during each phase $i$ we have that
\begin{align*}
    R(\bu_{1:T}) = \sum_{i=0}^N \sum_{t=t_i}^{t_{i+1}-1} R(\bu_{t_i:t_{i+1}-1})
    \leq \sum_{i=0}^N \sum_{t=t_i}^{t_{i+1}-1} \frac{D^2 + \gamma C_i + \beta_i^2 }{\beta_i^2} \min( B_1^i, B_2^i ),
\end{align*}
where in the last inequality we used \cref{thm:adaImplicit_regret} with $B_1^i = \ell_{t_i}(\bx_1) - \ell_{t_{i+1}-1}(\bx_{t_{i+1}}) + V_i$ and $B_2 = \sqrt{(2D^2 + \beta^2) \sum_{t=t_i}^{t_{i+1} - 1} \| \bg_t \|_\star^2 }$~. 

Note that with the adopted $\beta_i$ from \cref{algo:dynamic_adaImplicit} we have that
\begin{align*}
    \frac{D^2 + \gamma C_i + \beta_i^2 }{\beta_i^2} & = \frac{D^2 + \gamma C_i + D^2 + \gamma Q_i }{D^2 + \gamma Q_i} = 2 + \gamma \frac{(C_i - Q_i)}{D^2 + \gamma Q_i} \leq 2 + \gamma \frac{\sqrt{2} D}{D^2 + \gamma \sqrt2 D 2^i},
\end{align*}
where the last inequality derives from the fact that the last term in $C_i$ which causes the algorithm to restart is such that $ \| \bx - \by \| \leq \sqrt{2} D, \, \forall \bx, \by \in \mathcal{V} $.

Therefore, we have
\begin{align*}
    R (\bu_{1:T}) & \leq \sum_{i=0}^N \sum_{t=t_i}^{t_{i+1}-1} \left( 2 + \gamma  \frac{\sqrt{2}}{D + \gamma 2^{i + \frac12}} \right) \min(B_1^i, B_2^i) \\
    & \leq \sum_{i=0}^N ( 2 + c ) \min \Bigg\{ \ell_{t_i}(\bx_{t_i}) - \ell_{t_{i+1}-1}(\bx_{t_{i+1}}) + V_i, \, \sqrt{(3 D^2 + \gamma \sqrt2 D 2^i) \sum_{t=t_i}^{t_{i+1}-1} \| \bg_t \|_\star^2}  \Bigg\} \\
    & \leq (2 + c) \sum_{i=0}^N \min \Bigg\{ \ell_{t_i}(\bx_{t_i}) - \ell_{t_{i+1}-1}(\bx_{t_{i+1}}) + V_i, \, \sqrt{(3 D^2 + \gamma C_i) \sum_{t=t_i}^{t_{i+1}-1} \| \bg_t \|_\star^2} \Bigg\} \\
    & \leq (2 + c) \min \bigg\{ \underbrace{\sum_{i=0}^N (\ell_{t_i}(\bx_{t_i}) - \ell_{t_{i+1}-1}(\bx_{t_{i+1}}) + V_i )}_{(a)}, \,\underbrace{\sum_{i=0}^N \sqrt{(3 D^2 + \gamma C_i) \sum_{t=t_i}^{t_{i+1}-1} \| \bg_t \|_\star^2}}_{(b)} \bigg\},
    \end{align*}
where in the the second inequality we used the definition of $c$. We now analyze $(a)$ and $(b)$ separately.

For $(b)$, using the Cauchy-Schwartz inequality we have that %
\begin{align*}
    \sum_{i=0}^N \sqrt{(3 D^2 + \gamma C_i) \sum_{t=t_i}^{t_{i+1}-1} \| \bg_t \|_\star^2} &\leq \sqrt{ \sum_{i=0}^N (3D^2 + \gamma C_i)} \cdot \sqrt{\sum_{i=0}^N \sum_{t=t_i}^{t_{i+1}-1} \| \bg_t \|_\star^2} \\
    & = \sqrt{3 N D^2 + \gamma C_T} \cdot \sqrt{\sum_{t=1}^T \| \bg_t \|_\star^2 } \\
    & \leq \sqrt{ \left(3 D^2 \left(\log_2 \frac{C_T}{\sqrt2 D} + 1 \right) + \gamma C_T \right) \sum_{t=1}^T \| \bg_t \|_\star^2 }~.
\end{align*}
On the other hand, for $(a)$ we have
\begin{align*}
    \sum_{i=0}^N & ( \ell_{t_i}(\bx_{t_i}) - \ell_{t_{i+1}-1}( \bx_{t_{i+1}} ) + V_i ) \leq \ell_1(\bx_1) - \ell_T( \bx_{T+1} ) + V_T ~.
\end{align*}
Therefore, putting together the results for (a) and (b), we get the stated bound.
\end{proof}

To summarize, for sequence of comparators whose path-length is observable, from \Cref{eq:diomd_doubling_trick} we have a worst-case regret bound of
\begin{equation}
    R(\bu_{1:T}) = \tilde{\mathcal{O}} \left( \min \left\{ V_T, \sqrt{T (1 + C_T)} \right\} \right)~.
\end{equation}
In light of this last result, compared to \cite{jadbabaie2015online}, our upper bound from \Cref{thm:adaimplicit_doubling_trick} strictly improves their result when optimistic predictions are not helpful.

We stress that a doubling trick is necessary for \Cref{algo:dynamic_adaImplicit}. Indeed, in order to have a fully adaptive learning rate, we should be able to tune it as a function of two quantities varying over time, namely the path-length observed and the temporal variability of the losses paid by the algorithm. While both quantities are increasing quantities over time, they also appear both at the numerator and denominator of the learning rate $\lambda_t$. However, this would result in a non-monotone sequence of learning rates, thus contradicting the assumptions in \Cref{thm:dynamic_iomd_bound}. Also, we would like to point out that to the best of our knowledge there are no existing methods in the literature which tune the learning rates with non-monotone sequences.

\section{Learning with Expert Advice}
\label{appendix:lea}

In this section we cover the application of \cref{algo:dynamic_iomd} to the setting of Learning with Expert Advice, as explained in the main paper in \cref{sec:extension}.

\learningExpertAdvice*

\begin{proof}
Given any sequence $\bu_{1:T}$, with $\bu_t \in \Delta_d$, we introduce $\bu_t' = \frac{\alpha}{d} \vec{\bm{1}} + (1-\alpha) \bu_t $, where $\vec{\bm{1}}$ is the $d$-dimensional all-ones vector. Note that $\bu_t' \in \Delta_d^\alpha$ by definition.
The regret can be decomposed as follows
\begin{align*}
    R_T(\bu_{1:T}) & = \sum_{t=1}^T \ell_t(\bx_t) - \sum_{t=1}^T \ell_t(\bu_t) = \underbrace{\sum_{t=1}^T \langle \bg_t, \bx_t - \bu_t' \rangle}_{(a)} + \underbrace{\sum_{t=1}^T \langle \bg_t, \bu_t' - \bu_t \rangle}_{(b)}~.
\end{align*}
Now, note that for $(b)$
\begin{align}
\label{eq:(b)}
    \sum_{t=1}^T \langle \bg_t, \bu_t' - \bu_t \rangle &\le \sum_{t=1}^T \|\bg_t\|_\infty \|\bu_t' - \bu_t\|_1 \nonumber\\
    & \le L_\infty \sum_{t=1}^T \left\| \frac{\alpha}{d} \vec{\bm{1}} + (1-\alpha) \bu_t - \bu_t \right\|_1 \nonumber\\
    & \le L_\infty \sum_{t=1}^T \left(\frac{\alpha}{d} \|\vec{\bm{1}}\|_1 + \alpha \|\bu_t\|_1\right) \nonumber\\
    &= 2 L_\infty T \alpha~,
\end{align}
where the second-to-last inequality derives from applying the triangle inequality.

We can now analyze $(a)$. Note that for any $\bu_t', \bu_{t-1}', \bx_t \in \mathcal{V}$ the following holds
\begin{align*}
    B_\psi (\bu_t', \bx_t) - B_\psi(\bu_{t-1}', \bx_t) &= \psi(\bu_t') - \psi(\bu_{t-1}') - \langle \nabla \psi(\bx_t), \bu_t' - \bu_{t-1}' \rangle \\
    & = - B_\psi(\bu_{t-1}', \bu_t') + \langle \nabla \psi(\bx_t) - \nabla \psi(\bu_t'), \bu_{t-1}' - \bu_t' \rangle \\
    & \leq \| \nabla \psi(\bx_t) - \nabla \psi(\bu_t') \|_\infty \| \bu_{t-1}' - \bu_t' \|_1 \\
    & \leq \ln \frac{d}{\alpha} \cdot \| \bu_{t-1}' - \bu_t' \|_1,
\end{align*}
where the last inequality derives from the fact that $ \| \nabla \psi(\bx_t) - \nabla \psi(\bu_t') \|_\infty = \max_{i \in [d]} \ln \tfrac{x_{t,i}}{u_{t,i}'} \leq \ln \tfrac{d}{\alpha} $. 

Therefore, using the prescribed learning rate and applying \cref{thm:dynamic_iomd_bound} we get
\begin{align}
\label{eq:(a)}
    \sum_{t=1}^T \langle \bg_t, \bx_t - \bu_t' \rangle \nonumber & \le \lambda_T D^2 + \gamma \sum_{t=2}^T \lambda_t \|\bu_t' - \bu_{t-1}'\|_1 + \sum_{t=1}^T \delta_t \nonumber\\
    & \le \lambda_T \left( D^2 + \ln \frac{d}{\alpha} \sum_{t=2}^T (1-\alpha)\|\bu_t - \bu_{t-1}\|_1 \right) + \beta^2 \lambda_{T+1} \nonumber\\
    & \le \lambda_{T+1} \left( \ln \frac{d}{\alpha} + \tau \ln \frac{d}{\alpha} + \beta^2  \right)~,
\end{align}
where the last inequality derives from bounding the diameter of $\Delta_d^\alpha$ with respect to the KL as done in \cref{eq:max_KL}, $1-\alpha \le 1$ and the assumption on $C_T(\bu_{1:T}) \le \tau$, while the second-to-last inequality from the definition of $\bu_t'$.

Similarly to the proof of \cref{thm:adaImplicit_regret}, we have that
\begin{align} 
\label{eq:lea_helper_1}
    \beta^2 \lambda_{T+1} & = \sum_{t=1}^T \ell_t(\bx_t) - \ell_t(\bx_{t+1}) - \lambda_t B_\psi(\bx_{t+1}, \bx_t) \nonumber \\ &\le \sum_{t=1}^T \ell_t(\bx_t) - \ell_t(\bx_{t+1}) \nonumber \\
    &\le \ell_1(\bx_1) - \ell_T(\bx_{T+1}) + \sum_{t=2}^T \max_{\bx \in \Delta_d^\alpha} \ell_t(\bx) - \ell_{t-1}(\bx) \nonumber \\
    &\le \ell_1(\bx_1) - \ell_T(\bx_{T+1}) + \sum_{t=2}^T \max_{\bx \in \Delta_d} \ell_t(\bx) - \ell_{t-1}(\bx) \nonumber\\
    &= \ell_1(\bx_1) - \ell_T(\bx_{T+1}) + V_T~.
\end{align}

On the other hand, we can improve the second part of the bound compared to a standard application of \cref{thm:adaImplicit_regret}. Indeed, we have that
\begin{equation*}
    \delta_t \le \langle \bg_t, \bx_t - \bx_{t+1} \rangle \le \langle \bg_t, \bx_t \rangle := \mathbb{E}_t[\bg_t]~,
\end{equation*}
where $\mathbb{E}_t[\bg_t]$ is the expected value of $\bg_t$ under the distribution $\bx_t$.
Furthermore, using the local norms bound for \emph{Mirror Descent} \citep[Section 6.5]{Orabona19} we have that
\begin{align*}
    \delta_t &= \langle \bg_t, \bx_t - \bx_{t+1} \rangle - \lambda_t B_\psi(\bx_{t+1}, \bx_t) \le \frac{1}{2\lambda_t}\|\bg_t\|^2_{(\nabla^2 \psi(\bz_t))^{-1}} = \frac{1}{2\lambda_t}\sum_{i=1}^d z_{t,i} g_{t,i}^2~,
\end{align*}
for a certain $\bz_t=\theta_t \bx_t + (1-\theta_t)\bx_{t+1}$ and $\theta_t \in [0, 1]$. Now, observe that from the fact that $\delta_t \ge 0$ we get $\langle \bg_t, \bx_t \rangle \ge \langle \bg_t, \bx_{t+1}\rangle$. Therefore, it follows that $ \langle \bg_t, \bz_t \rangle = \theta \langle \bg_t, \bx_t \rangle + (1-\theta)\langle \bg_t, \bx_{t+1} \rangle \le \langle \bg_t, \bx_t \rangle$, since the $g_{t,i} \ge 0$ for all $i$. Hence, we have that $\delta_t \le \frac{1}{2\lambda_t} \sum_{i=1}^d x_{t,i} g_{t,i}^2 := \frac{\mathbb{E}_t[\bg_t^2]}{2\lambda_t}$. 

To summarize, we have that
\begin{equation*}
    \lambda_{t+1} = \lambda_t + \frac{1}{\beta^2} \delta_t \le \lambda_t + \frac{1}{\beta^2} \min\left\{ \mathbb{E}_t[\bg_t], \frac{\mathbb{E}_t[\bg_t^2]}{2\lambda_t} \right\}~.
\end{equation*}
Therefore, applying \cref{lemma:adahedge} with $\Delta_t = \lambda_t, a_t^2 = \mathbb{E}_t[\bg_t^2],\, b_t=\mathbb{E}_t[\bg_t], d=c=\frac{1}{\beta^2}$ we have that
\begin{align}
\label{eq:local_norms_bound}
    \lambda_{T+1} & \le \sqrt{\frac{1}{\beta^4} \sum_{t=1}^T \left(\mathbb{E}_t[\bg_t]\right)^2 + \frac{1}{\beta^2} \sum_{t=1}^T \mathbb{E}_t[\bg_t^2]} \nonumber \\
    & \le \frac{1}{\beta^2} \sqrt{(1 + \beta^2) \sum_{t=1}^T \mathbb{E}_t[\bg_t]^2}~,
\end{align}
where the last step derives from Jensen's inequality, i.e., $(\mathbb{E}_t[\bg_t])^2 \le \mathbb{E}_t[\bg_t^2]$. 
Therefore, putting together \cref{eq:lea_helper_1} and \cref{eq:local_norms_bound} we get
\begin{align*}
    \lambda_{T+1} &\le \frac{1}{\beta^2} \min\Big\{ \ell_1(\bx_1) - \ell_T(\bx_{T+1}) + V_T,\, \sqrt{(1 + \beta^2) \textstyle\sum_{t=1}^T \mathbb{E}_t[\bg_t^2]} \Big\}~,
\end{align*}
Finally, note that from our choice of $\alpha$ and $\beta^2$, we have that
\begin{equation*}
    \frac{\ln \frac{d}{\alpha} + \tau \ln \frac{d}{\alpha} + \beta^2}{\beta^2} = \frac{(1+\tau)\ln T+\beta^2}{\beta^2} = 2~.
\end{equation*}
Hence, adding together \cref{eq:(a)} and the upper bounds to \cref{eq:(b)} in \cref{eq:lea_helper_1} and \cref{eq:local_norms_bound} yields the stated result.
\end{proof}

We next provide a corollary which shows that the bound in \cref{thm:lea} implies a first-order bound which depends on the loss of the sequence of competitors.

\begin{corollary}
Assume that $\max_{i, t} g_{t,i} = L_\infty$. Then, under the same assumptions of \cref{thm:lea}, \cref{algo:dynamic_iomd} guarantees
\begin{equation*}
    R_T(\bu_{1:T}) \le 2\sqrt{L_\infty(1+(1+\ln\tau)\ln T) L_T(\bu_{1:T})} + \mathcal{O}(\ln T)~.
\end{equation*}
\end{corollary}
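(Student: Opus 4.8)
The plan is to feed the regret bound of \cref{thm:lea} back into itself, exploiting the nonnegativity of the gradients to replace the ``variance'' term $\sum_{t=1}^T \mathbb{E}_t[\bg_t^2]$ by a multiple of the algorithm's own cumulative loss, which in turn equals the regret plus the comparator loss $L_T(\bu_{1:T}) \triangleq \sum_{t=1}^T \ell_t(\bu_t)$. Concretely, I would first drop the temporal-variability branch of the minimum in \cref{eq:lea_regret}, keeping only
\[
R_T(\bu_{1:T}) \le 2\sqrt{\bigl(1 + (1+\tau)\ln T\bigr) \textstyle\sum_{t=1}^T \mathbb{E}_t[\bg_t^2]} + 2 L_\infty d~,
\]
where I have substituted the value $\beta^2 = (1+\tau)\ln T$ prescribed in \cref{thm:lea}.

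Next I would bound the variance term. Since $0 \le g_{t,i} \le L_\infty$ and $\bx_t$ lies in the simplex, $\mathbb{E}_t[\bg_t^2] = \sum_{i=1}^d x_{t,i} g_{t,i}^2 \le L_\infty \sum_{i=1}^d x_{t,i} g_{t,i} = L_\infty \langle \bg_t, \bx_t \rangle = L_\infty \ell_t(\bx_t)$, using linearity of $\ell_t$. Summing over $t$ and recalling $\sum_{t=1}^T \ell_t(\bx_t) = R_T(\bu_{1:T}) + L_T(\bu_{1:T})$ gives $\sum_{t=1}^T \mathbb{E}_t[\bg_t^2] \le L_\infty \bigl(R_T(\bu_{1:T}) + L_T(\bu_{1:T})\bigr)$. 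Plugging this in yields a self-bounded inequality of the form $R \le 2\sqrt{A(R + L)} + B$ with $R = R_T(\bu_{1:T})$, $L = L_T(\bu_{1:T})$, $A = L_\infty(1 + (1+\tau)\ln T)$, and $B = 2 L_\infty d$.

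Finally I would solve this inequality. If $R \le B$ the claim holds trivially since $B = \mathcal{O}(\ln T)$; otherwise $R - B \ge 0$, and squaring $R - B \le 2\sqrt{A(R+L)}$ produces a quadratic in $R$ whose larger root gives $R \le (2A + B) + 2\sqrt{A(A + B + L)}$, after which subadditivity of the square root separates out the leading term $2\sqrt{AL}$ and collects the remainder into $\mathcal{O}(A + B) = \mathcal{O}(\ln T)$ (treating $d$ and $L_\infty$ as constants). Substituting $A$ back reproduces the stated bound. I do not expect a genuine obstacle: the only points requiring care are (i) keeping the leading constant exactly $2$, which is why I separate the $R \le B$ case and square rather than invoke the cruder ``$x \le a\sqrt{x} + b \Rightarrow x = \mathcal{O}(a^2 + b)$'' estimate, and (ii) the inequality $\mathbb{E}_t[\bg_t^2] \le L_\infty \ell_t(\bx_t)$, which hinges on the sign assumption $g_{t,i} \ge 0$ already imposed in \cref{thm:lea}.
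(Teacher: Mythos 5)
Your proposal is correct and follows essentially the same route as the paper: bound $\mathbb{E}_t[\bg_t^2] \le L_\infty \ell_t(\bx_t)$ via $0\le g_{t,i}\le L_\infty$, feed the resulting cumulative-loss term back into the bound of \cref{thm:lea}, and solve the self-bounding quadratic. The only cosmetic difference is that the paper phrases the quadratic in the algorithm's total loss $L_T$ and invokes the standard lemma ``$x - b\sqrt{x} - c \le 0 \Rightarrow x \le c + b^2 + b\sqrt{c}$'', whereas you solve directly in $R$ after a case split — the computations coincide.
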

\begin{proof}
Let $L_T = \sum_{t=1}^T \ell_t(\bx_t)$ and $L_T(\bu_{1:T}) = \sum_{t=1}^T \ell_t(\bu_t)$. Consider the second term in the minimum of the bound in \cref{eq:lea_regret}. We have that
\begin{align*}
    L_T - L_T(\bu_{1:T}) &\le B' \sqrt{ \sum_{t=1}^T \sum_{i=1}^d x_{t,i} g_{t,i}^2} + B'' 
    \le B' \sqrt{L_\infty L_T } + B''~,
\end{align*}
where $B'=2\sqrt{1+(1+\ln\tau)\ln T}$ and $B''=2L_\infty d$. Rearranging terms, we have that
\begin{equation*}
    L_T - \underbrace{B'\sqrt{L_\infty}}_{=b}\sqrt{L_T} - \underbrace{(L_T(\bu_{1:T}) + B'')}_{=c} \le 0
\end{equation*}
We use a result %
which says that given $x,b,c \in \mathbb{R}_+$, if $x - b\sqrt{x} - c \leq 0$ holds, then $x \leq c + b^2 + b\sqrt{c}$. Applying this result with $x=\sqrt{L_T}$ we get
\begin{equation*}
    L_T \le L_T(\bu_{1:T}) + 2L_\infty d + 4 L_\infty(1+(1+\ln\tau)\ln T) + 2\sqrt{L_\infty(1+(1+\ln\tau)\ln T)L_T(\bu_{1:T})}~.
\end{equation*}
Rearranging terms yields the stated result.
\end{proof}

\section{Composite losses} 
\label{appendix:composite}

We are now going to derive a regret bound on the case of composite losses for the static regret scenario using the algorithm \emph{AdaImplicit} from \cite{campolongo2020temporal}. 

\begin{theorem} 
\label{regret:composite_losses}
Let $\mathcal{V} \subset X \subseteq \mathbb{R}^d $ be a non-empty closed convex set. Let $\ell_t(\bx) = \tilde{\ell}_t(\bx) + r(\bx) $, where $r: X \rightarrow \mathbb{R}$ is a convex function. Let $ B_\psi $ be the Bregman divergence with respect to $\psi: X \rightarrow \mathbb{R}$. Assume $\psi$ to be 1-strongly convex w.r.t. $\|\cdot \|$ and let $\lambda_t = 1/\eta_t$. Then, \cref{algo:dynamic_iomd} with $\lambda_1=0$ and $ \lambda_t = \frac{1}{D^2} \sum_{i=1}^{t-1} \ell_i(\bx_i) - \ell_i(\bx_{i+1}) - \lambda_i B_\psi(\bx_{i+1}, \bx_i)$ for $t=2, \ldots, T$ incurs the following regret bound against any $\bu \in \mathcal{V}$
\begin{align}
        R_T(\bu) & \leq \min \Big\{ 2(\ell_1(\bx_1) - \ell_T(\bx_{T+1}) + V_T ),\, 2 D \sqrt{3 \textstyle\sum_{t=1}^T \|\bg_t \|_\star^2 } \Big\},
\end{align}
where $V_T = \sum_{t=2}^T \max_{\bx \in \mathcal{V}} \tilde\ell_t(\bx) - \tilde\ell_{t-1}(\bx)$, and $\bg_t \in \partial\ell_t(\bx_t)$.
\end{theorem}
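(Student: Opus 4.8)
The plan is to specialize the static Implicit-OMD analysis (essentially \cref{thm:adaImplicit_regret} with $\tau=0$, or equivalently \citet[Theorem 6.2]{campolongo2020temporal}) to a single fixed comparator $\bu$, and to use the composite decomposition $\ell_t=\tilde\ell_t+r$ at exactly one place — the telescoping step — so that the fixed part $r$ cancels and only $\tilde\ell_t$ enters the temporal variability. Because the comparator is fixed, the path-length vanishes and the Lipschitz-on-Bregman constant $\gamma$ never appears, so one can run the derivation directly, using only that $\psi$ is $1$-strongly convex (hence $\tfrac12\|\bx-\by\|^2 \le B_\psi(\bx,\by) \le D^2$) rather than invoking the Lipschitz assumption of \cref{thm:dynamic_iomd_bound}.

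First I would establish the base bound. Let $\bg_t'\in\partial\ell_t(\bx_{t+1})$. The first-order optimality condition for $\bx_{t+1}=\argmin_{\bx\in\mathcal{V}}\ell_t(\bx)+\lambda_t B_\psi(\bx,\bx_t)$ together with convexity of $\ell_t$ gives, via the three-point identity, $\ell_t(\bx_{t+1})-\ell_t(\bu)\le \lambda_t\big(B_\psi(\bu,\bx_t)-B_\psi(\bu,\bx_{t+1})-B_\psi(\bx_{t+1},\bx_t)\big)$. Adding $\ell_t(\bx_t)-\ell_t(\bx_{t+1})$ to both sides, recognizing $\delta_t:=\ell_t(\bx_t)-\ell_t(\bx_{t+1})-\lambda_t B_\psi(\bx_{t+1},\bx_t)$, and summing over $t$, one gets $R_T(\bu)\le \sum_t \lambda_t\big(B_\psi(\bu,\bx_t)-B_\psi(\bu,\bx_{t+1})\big)+\sum_t\delta_t$. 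Since the implicit update forces $\delta_t\ge 0$ (from $\ell_t(\bx_{t+1})+\lambda_t B_\psi(\bx_{t+1},\bx_t)\le \ell_t(\bx_t)$), the chosen rate $\lambda_{t+1}=\lambda_t+\delta_t/D^2$ is non-decreasing with $\lambda_1=0$; an Abel-summation argument with $B_\psi(\bu,\cdot)\le D^2$ then bounds the first sum by $\lambda_T D^2$. Using $\sum_{t=1}^T\delta_t=D^2\lambda_{T+1}$ and monotonicity once more yields $R_T(\bu)\le \lambda_T D^2 + D^2\lambda_{T+1}\le 2D^2\lambda_{T+1}$.

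Next I would bound $\lambda_{T+1}$ in two ways. For the temporal-variability bound, drop the negative Bregman term, $D^2\lambda_{T+1}=\sum_t\delta_t\le\sum_t(\ell_t(\bx_t)-\ell_t(\bx_{t+1}))$, and telescope this as $\ell_1(\bx_1)-\ell_T(\bx_{T+1})+\sum_{t=2}^T(\ell_t(\bx_t)-\ell_{t-1}(\bx_t))$; here $\ell_t(\bx_t)-\ell_{t-1}(\bx_t)=\tilde\ell_t(\bx_t)-\tilde\ell_{t-1}(\bx_t)\le \max_{\bx\in\mathcal{V}}\big(\tilde\ell_t(\bx)-\tilde\ell_{t-1}(\bx)\big)$ since $r(\bx_t)$ cancels, so $D^2\lambda_{T+1}\le \ell_1(\bx_1)-\ell_T(\bx_{T+1})+V_T$. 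For the gradient bound, with $\bg_t\in\partial\ell_t(\bx_t)$ I use $\delta_t\le \ell_t(\bx_t)-\ell_t(\bx_{t+1})\le\langle\bg_t,\bx_t-\bx_{t+1}\rangle$; combined with $\|\bx_t-\bx_{t+1}\|\le\sqrt2 D$ this gives $\delta_t\le\sqrt2 D\|\bg_t\|_\star$, while keeping the Bregman term and applying $bx-\tfrac{a}{2}x^2\le\tfrac{b^2}{2a}$ gives $\delta_t\le\|\bg_t\|_\star^2/(2\lambda_t)$. Thus $\lambda_{t+1}\le\lambda_t+\tfrac{1}{D^2}\min\{\sqrt2 D\|\bg_t\|_\star,\ \|\bg_t\|_\star^2/(2\lambda_t)\}$, and \cref{lemma:adahedge} with $a_t=b_t=\|\bg_t\|_\star$, $d=\sqrt2/D$, $c=1/D^2$ yields $\lambda_{T+1}\le\tfrac{1}{D}\sqrt{3\sum_t\|\bg_t\|_\star^2}$. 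Plugging the two estimates into $R_T(\bu)\le 2D^2\lambda_{T+1}$ and taking the minimum gives the claim.

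I do not expect a genuine obstacle: the argument is a careful transcription of the static Implicit-OMD analysis. The one point that needs attention is to deploy the composite structure at precisely the telescoping of $\ell_t(\bx_t)-\ell_{t-1}(\bx_t)$ — so that $V_T$ depends on $\tilde\ell$ alone — while everywhere else (in the gradient estimate, and in the optimality condition where $\bg_t'\in\partial\ell_t(\bx_{t+1})=\partial\tilde\ell_t(\bx_{t+1})+\partial r(\bx_{t+1})$) the loss must be treated as the single function $\ell_t$, since the proximal update in \cref{update_composite} solves the step for $\ell_t=\tilde\ell_t+r$ exactly. One should also double-check that $\lambda_1=0$ is compatible with the base-bound telescoping (it is, since the $\lambda_1 B_\psi(\bu,\bx_1)$ term is then zero).
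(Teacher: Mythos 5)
Your proposal is correct and follows essentially the same route as the paper: the base bound $R_T(\bu)\le 2D^2\lambda_{T+1}$ from the update's optimality condition, then bounding $\lambda_{T+1}$ once via telescoping (where the fixed part $r$ cancels so only $\tilde\ell_t$ enters $V_T$) and once via $\delta_t\le\min\{\sqrt2 D\|\bg_t\|_\star,\|\bg_t\|_\star^2/(2\lambda_t)\}$ together with \cref{lemma:adahedge}. The only cosmetic differences are that the paper splits the subgradient as $\bg_t'+\nabla r(\bx_{t+1})$ and telescopes $r$ separately via $\tilde\delta_t$, whereas you keep $\ell_t$ whole and cancel $r$ in the cross terms — algebraically equivalent.
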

\begin{proof}
First, let $\bg_t' \in \partial \tilde{\ell}_t(\bx_{t+1})$. Note that for any $\bu \in V$ we have the following
\begin{align*}
    \eta_t (\ell_t(\bx_{t+1}) - \ell_t(\bu)) &\leq \eta_t \langle \bg_t' + \nabla r(\bx_{t+1}), \bx_{t+1} - \bu \rangle \\
        & = \langle \eta_t \bg_t' + \nabla \psi(\bx_{t+1}) - \nabla \psi(\bx_t) +\eta_t \nabla r(\bx_{t+1}), \bx_{t+1} - \bu \rangle -\langle \nabla \psi(\bx_{t+1}) - \nabla \psi(\bx_t), \bx_{t+1} - \bu \rangle \\
        & \leq \langle \nabla \psi(\bx_{t+1}) - \nabla \psi(\bx_t), \bu - \bx_{t+1} \rangle \\
        & = B_\psi(\bu, \bx_t) - B_\psi(\bu, \bx_{t+1}) - B_\psi(\bx_{t+1}, \bx_t),
\end{align*}
where the second inequality derives from the optimality condition of the update rule.

Remember that $\delta_t = \ell_t(\bx_t) - \ell_t(\bx_{t+1}) - \frac{B_\psi(\bx_{t+1}, \bx_t)}{\eta_t} $. After adding $\ell_t(\bx_t)$ on both sides, taking $\ell_t(\bx_{t+1})$, dividing both sides by $\eta_t$ and summing over time we get
\begin{align*}
    \sum_{t=1}^T \left( \ell_t(\bx_{t}) - \ell_t(\bu) \right)& \leq \sum_{t=1}^T \frac{B_\psi(\bu, \bx_t) - B_\psi(\bu, \bx_{t+1})}{\eta_t} + \sum_{t=1}^T \delta_t \\
    & \leq \frac{D^2}{\eta_1} + D^2 \sum_{t=2}^T \left( \frac{1}{\eta_t} - \frac{1}{\eta_{t-1}} \right) + \sum_{t=1}^T \delta_t \\
    & \le 2 D^2 \lambda_{T+1}~.
\end{align*}
Now, define $\tilde\delta_t = \tilde\ell_t(\bx_t) - \tilde\ell_t(\bx_{t+1}) - \lambda_t B_\psi(\bx_{t+1}, \bx_t)$. Note that
\begin{align*}
    \sum_{t=1}^T \delta_t &= \sum_{t=1}^T \left[\tilde\delta_t + r(\bx_t) - r(\bx_{t+1}) \right] \\
        & \le \sum_{t=1}^T \left(\tilde\ell_t(\bx_t) - \tilde\ell_t(\bx_{t+1}) \right) + r(\bx_1) - r(\bx_{T+1}) \\
        & = \ell_1(\bx_1) - \ell_{T}(\bx_{T+1}) + \sum_{t=2}^T \left(\tilde\ell_t(\bx_t) - \tilde\ell_{t-1}(\bx_t)\right) \\
        & \le \ell_1(\bx_1) - \ell_T(\bx_{T+1}) + \sum_{t=1}^T \max_{\bx \in \mathcal{V}} \tilde\ell_t(\bx) - \tilde\ell_{t-1}(\bx)~.
\end{align*}
The rest of the proof follows from the proof of \cref{thm:adaImplicit_regret} in \cref{appendix:technical}.
\end{proof}
Compared to \citet{song2018fully}, the above regret bound is adaptive to the gradients of the loss function. Furthermore, the regret bound from \citet{song2018fully} does not contain the temporal variability $V_T$, which could potentially lead to constant regret if the loss function stays fixed over time. Next, we are going to show how one can adapt the previous theorem to the dynamic case.

\dynamicComposite*

\begin{proof}%
To prove the stated bound, we can adapt the proof from \cref{regret:composite_losses}. In particular, from the update rule using \cref{eq:update_optimality} we have that 
\begin{align*}
    \eta_t ( \ell_t(\bx_{t+1}) - \ell_t(\bu_t)) \leq B_\psi(\bu_t, \bx_t) - B_\psi(\bu_t, \bx_{t+1}) - B_\psi(\bx_{t+1}, \bx_t),
\end{align*}
where $\bg_t' \in \partial \ell_t(\bx_{t+1})$. Following the proof of \cref{thm:adaImplicit_regret}, summing $\ell_t(\bx_t)$ on both sides and rearranging terms we get
\begin{align*}
    R_T (\bu_{1:T}) 
    &\leq \sum_{t=1}^T \lambda_t (B_\psi(\bu_t, \bx_t) - B_\psi(\bu_t, \bx_{t+1})) + \sum_{t=1}^T \left[ \tilde\delta_t + \beta(r(\bx_t) - r(\bx_{t+1})) \right] \\
    & \leq 2 (D^2 + \gamma C_T + \beta^2) \lambda_{T+1}~.
\end{align*}
From the last inequality, substituting the value of $\beta^2$ and following the proof of \cref{regret:composite_losses} yields the desired result.
\end{proof}

\section{Combining algorithms} 
\label{appendix:combiner}

\begin{algorithm}[t]
\caption{Adapt-ML-Prod}
\label{algo:adaptmlprod}
\begin{algorithmic}[1]
{
\REQUIRE{A rule to sequentially pick the learning rates, vector $\bw_0=(w_{0,1},\dots,w_{0,d})$ of nonnegative weights that sum to 1.}
\FOR{$t=1,\dots,T$}
\STATE{Pick the learning rates $\eta_{t-1,i}$ according to the rule.}
\STATE{Define $\bp_t$ such that $p_{t,i}=\eta_{t-1,i}w_{t-1,i} / \bm{\eta}_{t-1}^\top \bw_{t-1}$}
\STATE{Observe $\bg_t$ and incur loss $\ell_t(\bp_t) = \langle \bg_t, \bp_t \rangle$}
\STATE{For each expert $i$ perform the update
$$ w_{t,i} = \left( w_{t-1,}\left( 1 + \eta_{t-1,i}(\ell_t(\bp_t) - \ell_{t,i}) \right) \right)^{\frac{\eta_{t,i}}{\eta_{t-1,i}}}$$
}
\ENDFOR
}
\end{algorithmic}
\end{algorithm}

In this section, we give in detail the results related to \cref{sec:combiner}.

First, we point out that \cref{algo:prod} is an application of the more general \emph{Adapt-ML-Prod} algorithm, which is given in \cref{algo:adaptmlprod}. We recall the following theorem which provides a regret bound to \cref{algo:adaptmlprod} (proof omitted).
\begin{theorem}{\citep[Theorem 3]{gaillard2014second}}
\label{thm:adaptmlprod}
For all sequences of loss vectors $\bg_t \in [0,1]^d$, for all rules prescribing sequences of learning rates $\eta_{t,i}$ that, for each $i$, are non-increasing in $t$, \cref{algo:adaptmlprod} ensures
\begin{equation*}
    R_T(\be_i) \le \frac{1}{\eta_{0,i}}\ln\frac{1}{w_{0,i}}+\sum_{t=1}^T \eta_{t-1,i} r_{t,i}^2 + \frac{1}{\eta_{T,i}} \ln K_T~,
\end{equation*}
where $r_{t,i} = \langle \bg_t, \bp_t \rangle - g_{t,i}$ and $K_T = 1 + \frac{1}{e} \sum_{t=1}^T \sum_{j=1}^d \left(\frac{\eta_{t-1,j}}{\eta_{t,j}} - 1 \right)$.
\end{theorem}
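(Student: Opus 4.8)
The plan is to run the classical potential-function argument for Prod-type updates, but carried out on the \emph{logarithms} of the weights, exactly as in the analysis of Adapt-ML-Prod. First I would observe that the exponent $\eta_{t,i}/\eta_{t-1,i}$ in the weight update is tailored so that, after taking logs, $\ln w_{t,i}/\eta_{t,i} = \ln w_{t-1,i}/\eta_{t-1,i} + \ln(1+\eta_{t-1,i}r_{t,i})/\eta_{t-1,i}$, which telescopes to
\[
\frac{\ln w_{T,i}}{\eta_{T,i}} = \frac{\ln w_{0,i}}{\eta_{0,i}} + \sum_{t=1}^T \frac{\ln(1+\eta_{t-1,i}r_{t,i})}{\eta_{t-1,i}}~.
\]
Since the losses are in $[0,1]$ we have $r_{t,i}\in[-1,1]$, and since the learning-rate rule keeps $\eta_{t-1,i}\le 1/2$ we have $\eta_{t-1,i}r_{t,i}\ge-1/2$, so the elementary inequality $\ln(1+u)\ge u-u^2$ (valid for $u\ge-1/2$) gives $\ln(1+\eta_{t-1,i}r_{t,i})/\eta_{t-1,i}\ge r_{t,i}-\eta_{t-1,i}r_{t,i}^2$. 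Summing and using $\sum_t r_{t,i}=R_T(\be_i)$ reduces the claim to an upper bound on $\ln w_{T,i}$, equivalently on the potential $\Phi_t\triangleq\sum_{j=1}^d w_{t,j}$ (as $w_{T,i}\le\Phi_T$).

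To bound $\Phi_t$, introduce the pre-exponentiation weights $\tilde w_{t,i}\triangleq w_{t-1,i}(1+\eta_{t-1,i}r_{t,i})$, so $w_{t,i}=\tilde w_{t,i}^{\,\eta_{t,i}/\eta_{t-1,i}}$. The crucial point is that $\bp_t$ is the mixture proportional to $(\eta_{t-1,i}w_{t-1,i})_i$, whence $\sum_i \eta_{t-1,i}w_{t-1,i}r_{t,i} = \big(\sum_j\eta_{t-1,j}w_{t-1,j}\big)\sum_i p_{t,i}(\langle\bg_t,\bp_t\rangle - g_{t,i}) = 0$, so that $\sum_i\tilde w_{t,i}=\sum_i w_{t-1,i}=\Phi_{t-1}$: the linearized potential does not grow. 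For the exponentiation step I would use $\eta_{t,i}\le\eta_{t-1,i}$, i.e.\ $\alpha_{t,i}\triangleq\eta_{t,i}/\eta_{t-1,i}\in(0,1]$, together with the inequality $x^\alpha\le x+\tfrac{1-\alpha}{e\alpha}$ for $x\ge0$, $\alpha\in(0,1]$, which follows from $\ln\alpha\le\alpha-1$ after optimizing $x^\alpha-x$ over $x$. This yields $w_{t,i}\le\tilde w_{t,i}+\tfrac1e\big(\eta_{t-1,i}/\eta_{t,i}-1\big)$, hence $\Phi_t\le\Phi_{t-1}+\tfrac1e\sum_i\big(\eta_{t-1,i}/\eta_{t,i}-1\big)$. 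Summing over $t$ and using $\Phi_0=\sum_i w_{0,i}=1$ gives $w_{T,i}\le\Phi_T\le K_T$.

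Putting $\ln w_{T,i}\le\ln K_T$ (valid since $K_T\ge1$) into the reduction from the first paragraph gives
\[
R_T(\be_i)\le \frac{1}{\eta_{0,i}}\ln\frac{1}{w_{0,i}} + \sum_{t=1}^T \eta_{t-1,i}r_{t,i}^2 + \frac{1}{\eta_{T,i}}\ln K_T~,
\]
which is the stated bound. The step I expect to be the main obstacle — or at least the one demanding the most care — is the potential bound: reproducing exactly the constant $\tfrac1e$ and the \emph{reciprocal} ratio $\eta_{t-1,j}/\eta_{t,j}-1$ appearing in $K_T$ depends on using the sharp inequality $x^\alpha\le x+(1-\alpha)/(e\alpha)$ rather than the cruder $x^\alpha\le\alpha x+1-\alpha$, and the cancellation $\sum_i\eta_{t-1,i}w_{t-1,i}r_{t,i}=0$ must be matched precisely to the normalization $\sum_j\eta_{t-1,j}w_{t-1,j}$ used to define $\bp_t$. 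One should also keep track of the standing assumption $\eta_{t-1,i}\le1/2$, which guarantees both $\tilde w_{t,i}>0$ and the validity of the $\ln(1+u)$ inequality; the remaining manipulations are routine.
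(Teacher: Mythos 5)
Your proof is correct and is essentially the original argument of \citet[Theorem 3]{gaillard2014second} for Adapt-ML-Prod, which this paper cites with the proof omitted: the telescoping of $\ln w_{t,i}/\eta_{t,i}$, the bound $\ln(1+u)\ge u-u^2$ for $u\ge -1/2$, the cancellation $\sum_i \eta_{t-1,i}w_{t-1,i}r_{t,i}=0$ coming from the normalization defining $\bp_t$, and the potential step via $x^\alpha\le x+\frac{1-\alpha}{e\alpha}$ (which, as you note, reduces to $\ln\alpha\le\alpha-1$ and yields exactly the $\frac{1}{e}\left(\frac{\eta_{t-1,j}}{\eta_{t,j}}-1\right)$ terms in $K_T$) are precisely the ingredients of the cited proof. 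The one point to flag is that the statement as reproduced here silently drops the cap $\eta_{t,i}\le 1/2$ that appears in the original result; your argument correctly identifies that this cap is needed (both so that $1+\eta_{t-1,i}r_{t,i}>0$ and for the validity of $\ln(1+u)\ge u-u^2$), so it should be treated as an explicit hypothesis rather than something guaranteed by an arbitrary non-increasing learning-rate rule.
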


\subsection{Strongly-adaptive algorithms}

Strongly adaptive algorithms enjoy a $\tilde{\mathcal{O}}(\sqrt{I})$ regret bound for any interval $I=[s, e] \subseteq [1,T]$.
The argument used to prove the optimal dynamic regret bound for strongly adaptive algorithms is reported next for completeness. We first provide a lemma that we will use in the proof of the main result.

\begin{lemma}{\citep{cutkosky2020parameter}}
\label{lemma:breaking_interval}
Consider a set $\mathcal{V} \subset \mathbb{R}^d$ such that $\max_{\bx, \by \in \mathcal{V}} \| \bx - \by \| \leq D$. Define a time interval $I=[s,e]$ and let $\bu_t \in \mathcal{V}$ for any $t \in I$. Let $C_I = \sum_{t=s+1}^e \| \bu_t - \bu_{t-1}\| $. Then it is possible to break the interval $I$ into $K$ disjoint intervals $ I = J_1 \cup \cdots \cup J_K$ such that for each $i$ we have that $C_{J_i} \leq 2 D$~.
Furthermore, we have that $K \leq \frac{C_I + D}{D}$.
\end{lemma}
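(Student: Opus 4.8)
I would prove this by an explicit greedy partition of $I=[s,e]$. Scan $t$ upward from $s$ and grow the current sub-interval one time step at a time, tracking the path-length \emph{internal} to that sub-interval; as soon as appending a step would push this internal path-length above $D$, include that ``overshoot'' step as the last point of the current sub-interval, close it, and open the next sub-interval at the following time step. Repeat until $e$ is reached. This yields disjoint intervals $J_1,\dots,J_K$ that tile $[s,e]$, and by construction every $J_i$ with $i<K$ is \emph{saturated} in the sense that $C_{J_i}>D$; only the last interval $J_K$ may fail to be saturated (if the scan hit $e$ first).

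The bound $C_{J_i}\le 2D$ then follows from the diameter hypothesis. For a saturated interval $J_i=[s_i,e_i]$, the path-length over $[s_i,e_i-1]$ is at most $D$ (otherwise $J_i$ would have been closed one step earlier), while the final step contributes $\norm{\bu_{e_i}-\bu_{e_i-1}}\le D$ because $\bu_{e_i},\bu_{e_i-1}\in\mathcal{V}$ and $\max_{\bx,\by\in\mathcal{V}}\norm{\bx-\by}\le D$; adding the two gives $C_{J_i}\le 2D$. For $J_K$, either it is not saturated, in which case $C_{J_K}\le D$, or it is, in which case the same two-part estimate applies. The degenerate two-point interval is fine as well, since then $C_{J_i}$ is a single term $\le D$.

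For the bound on $K$, I would note that each step $t\in(s,e]$ either lies strictly inside exactly one $J_i$, in which case $\norm{\bu_t-\bu_{t-1}}$ is one of the summands of $C_{J_i}$, or is the opening step of some $J_i$ with $i\ge 2$, in which case it is counted in no $C_{J_j}$ at all. Hence $\sum_{i=1}^{K}C_{J_i}\le C_I$. Combining this with the saturation of $J_1,\dots,J_{K-1}$ gives $(K-1)D<\sum_{i=1}^{K-1}C_{J_i}\le C_I$, so $K<C_I/D+1$, i.e.\ $K\le (C_I+D)/D$, as claimed.

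I do not expect a genuine obstacle here; the only two points needing a little care are (i) that the last step of a saturated interval is charged to the diameter (at most $D$) while the earlier steps already used up at most another $D$, which is exactly what keeps each piece within $2D$; and (ii) that the steps straddling two consecutive intervals are simply dropped from every $C_{J_i}$, which only strengthens $\sum_i C_{J_i}\le C_I$ in the direction we want and keeps the $C_{J_i}$ as disjoint sub-sums of $C_I$, making the pigeonhole count immediate.
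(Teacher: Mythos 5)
Your proposal is correct and follows essentially the same greedy construction as the paper: close each sub-interval the first time its internal path-length reaches/exceeds $D$, bound the overshoot step by the diameter to get $C_{J_i}\le 2D$, and use saturation of all but the last piece together with $\sum_i C_{J_i}\le C_I$ to conclude $K\le (C_I+D)/D$. The only cosmetic difference is that you start the next piece at the following time step (dropping the straddling step), whereas the paper lets consecutive pieces share an endpoint; the argument is unchanged.
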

\begin{proof}
We can build the set of subintervals $J_1, \ldots, J_K$ iteratively. Define $J_1=[s, t_1]$ as the interval such that $t_1$ is the first time-step when $\sum_{t=s+1}^{t_1} \| \bu_t - \bu_{t-1} \| \geq D$. Then we have that $C_{J_1} \leq 2 D$ (since $\|\bu_t - \bu_{t-1}\| \leq D$ in any step $t$). We can repeat this process: given $t_{i-1}$, let $t_i$ be the time-step such that $C_{[t_{i-1}, t_i]} \geq D$ and define $J_i = [t_{i-1}, t_i]$. If such a $t_i$ does not exist then set $i=K$ and $t_i = e$. Then for any subinterval we have that $C_{J_i} \leq 2 D$. 

On the other hand, we have $C_{J_i} \ge D$ for all $i$ but the last one. Assume that there are $K$ subintervals. We have that $\sum_{i=1}^K C_{J_i} \leq C_{I}$. Therefore,
\begin{equation*}
    C_I \ge \sum_{i=1}^K C_{J_i} \ge (K-1) D~,
\end{equation*}
from which the desired result follows.
\end{proof}

We can now prove that result regarding the dynamic regret of strongly adaptive algorithms satisfying a certain condition on the path-length, as stated in \cref{thm:strongly_adaptive_path_length} (restated here for completeness).
\combinerDynamic*
\begin{proof}
Note that for any $\bx, \by \in \mathcal{V}$ we have $\frac12\|\bx-\by\|^2 \le B_\psi(\bx, \by) \le D^2$. Hence $\|\bx-\by\|\le \sqrt2D$.
Let $J_1, \ldots, J_K$ be the set of disjoint intervals resulting from the construction in \Cref{lemma:breaking_interval}. We have that $J = J_1 \cup \dots  \cup J_K$, such that for each $i$ we have $C_{J_i} \leq 2\sqrt2 D$, and $K \leq \frac{C_I + \sqrt2D}{\sqrt2D}$.
Now, on each of intervals $J_i = [s_i, e_i]$, using \Cref{eq:cutkosky_condition_1} the regret is bounded as
\begin{align*}
    R(\bu_{s_i, e_i}) &\leq \mathcal{O}\left( (D + C_{J_i}) \sqrt{|J_i|} \right) = \mathcal{O}\left( D \sqrt{|J_i|} \right)~,
\end{align*}
where in the last inequality we used the fact that $C_{J_i} \leq 2\sqrt2 D$ from \cref{lemma:breaking_interval}.

Then, for interval $J$ using the Cauchy-Schwartz inequality we have that 
\begin{align*}
    R(\bu_{s:e}) &= \sum_{i=1}^K R(\bu_{s_i:e_i}) \leq \sum_{i=1}^K \mathcal{O}\left(D\sqrt{|J_i|} \right) \le \mathcal{O}\left(D \sqrt{K \sum_{i=1}^K |J_i|}\right) \le \mathcal{O}\left(D \sqrt{\frac{C_I + D}{D}|J|} \right) \\
    & = \mathcal{O} \left( \sqrt{|J|D(C_I + D)} \right)
\end{align*}
where in the third inequality we used the fact that $K \leq \frac{C_I + \sqrt2D}{\sqrt2D}$ from \cref{lemma:breaking_interval}.
\end{proof}

In particular, note that the above theorem is valid also with $J =[1,T]$. 

We next prove \cref{thm:combiner}, providing a regret bound to the algorithm given in \cref{sec:combiner}.

\combinerTheorem*
\begin{proof}
Let $\by_{t}^i$ be the output of algorithm $i \in \{\mathcal{A,B}\}$ at time $t$. For any $i \in \{\mathcal{A,B}\}$, we can decompose the regret as follows
\begin{align}
\label{eq:prod_decomposition}
    \sum_{t=1}^T (\ell_t(\bx_t) - \ell_t(\bu_t)) &= \sum_{t=1}^T (\ell_t(\bx_t) - \ell_{t}(\by_{t}^i)) + \sum_{t=1}^T (\ell_{t}(\by_{t}^i) - \ell_t(\bu_{1:T})) \nonumber\\
    & \le \sum_{t=1}^T \sum_{i \in \{\mathcal{A, B}\}} p_{t,i} \ell_{t}(\by_t^i) -  \sum_{t=1}^T\ell_t(\by_t^i) + R_{T,i}(\bu_{1:T})~,
\end{align}
where we used Jensen's inequality and the regret guarantee of algorithm $i$ in the last step.

We now analyze the first part in the above bound. Recall that $w_{1,\mathcal{A}} = w_{1,\mathcal{B}} = \frac12$. For algorithm $\mathcal{A}$ we denote $\by_t^\mathcal{A} = \ba_t$. Also, we assume $\eta_t = \eta_{t, \mathcal{A}}$ and $\eta_{t, \mathcal{B}} = 1/2$ (cf. \cref{algo:prod}).

From \cref{thm:adaptmlprod} we have that 
\begin{align*} 
    \sum_{t=1}^T \sum_{i \in \{\mathcal{A}, \mathcal{B} \}} p_{t,i} \ell_{t}(\by_t^i) - \sum_{t=1}^T \ell_t(\ba_t) & \le \frac{1}{\eta_{0,\mathcal{A}}} \ln \frac{1}{w_{0,\mathcal{A}}} + \sum_{t=1}^T \eta_{t-1, \mathcal{A}} r_{t,\mathcal{A}}^2 + \frac{1}{\eta_{T,\mathcal{A}}} \ln K_T \\
    & = 2 \ln 2 + \sum_{t=1}^T \eta_{t-1,\mathcal{A}}(\ell_t(\bb_t) - \ell_t(\ba_t))^2 + \frac{1}{\eta_{T,\mathcal{A}}} \ln K_T \\
    & \le 2\ln2 + \sum_{t=1}^T \frac{(\ell_t(\bb_t) - \ell_t(\ba_t))^2}{\sqrt{1 + \sum_{i=1}^{t-1} (\ell_t(\bb_i) - \ell_t(\ba_i))^2}} + \ln K_T \sqrt{1 + \sum_{t=1}^T (\ell_t(\bb_t) - \ell_t(\ba_t))^2} \\
    & \leq 2\ln2 + (2 + \ln K_T)\sqrt{1 + \sum_{t=1}^T (\ell_t(\bb_t) - \ell_t(\ba_t))^2} \\
    & \le 2\ln2 + (2 + \ln K_T) \sqrt{T + 1}~,
\end{align*}
where the second-to-last inequality derives from applying \citet[Lemma 4.13]{Orabona19}, while the last one from the fact that the losses are bounded in $[0,1]$. Note that $\ln K_T = \mathcal{O}(\ln \ln T)$ from \citet[Corollary 4]{gaillard2014second}. Therefore, using this last result in \cref{eq:prod_decomposition} and the fact that $R_{T, \mathcal{A}} \le $ by assumption we have that 
\begin{equation}
\label{eq:regret_A}
    R_T(\bu_{1:T}) \le \mathcal{O}\left(\ln K_T \sqrt{T+1} + \sqrt{T D(C_T + D)} \right) = \tilde{\mathcal{O}}\left(\sqrt{TD(C_T + D)}\right)~.
\end{equation}

On the other hand, for algorithm $\mathcal{B}$ by applying again \cref{thm:adaptmlprod} we have that
\begin{align*}
    \sum_{t=1}^T \sum_{i \in \{\mathcal{A,B}\}} p_{t,i} \ell_t(\by_t^i) - \sum_{t=1}^T \ell_t(\bb_t) &\le \frac{1}{\eta_{0,\mathcal{B}}} \ln \frac{1}{w_{0,\mathcal{B}}} + \sum_{t=1}^T \eta_{t-1,\mathcal{B}} r_{t,\mathcal{B}}^2 + \frac{1}{\eta_{T,\mathcal{B}}} \ln K_T \\
    & = 2\ln2 + 2\ln K_T~,
\end{align*}
since $\eta_t = \frac12$ and $r_{t,\mathcal{A}} = 0$ for all $t$ by assumption (see \cref{algo:prod}). Using this last result in \cref{eq:prod_decomposition} we get
\begin{equation}
\label{eq:regret_B}
    R_T(\bu_{1:T}) \le \ell_1(\bx_1) - \ell_T(\bx_{T+1}) + V_T + 2\ln2 + 2\ln K_T = \tilde{\mathcal{O}}\left( V_T \right)~.
\end{equation}
Taking the minimum between \cref{eq:regret_A} and \cref{eq:regret_B} concludes the proof.
\end{proof}

\subsection{Learning with Expert Advice}

\begin{algorithm}[t]
\caption{Generic strongly-adaptive algorithm}
\label{algo:strongly_adaptive}
\begin{algorithmic}[1]
{
\REQUIRE{Non-empty closed convex set $\mathcal{V} \subset \mathbb{R}^d$, OLO algorithm $\mathcal{A}$, expert algorithm $\mathcal{B}$}
\FOR{$t=1,\dots,T$}
\STATE{Receive predictions from $\mathcal{A}_1, \ldots, \mathcal{A}_t$, denoted $\by_t^1, \ldots, \by_t^t$}
\STATE{Receive distribution $\bp_t$ from $\mathcal{B}$}
\STATE{Output $\bx_t = \sum_{i=1}^t p_{t,i} \by_t^i$}
\STATE{Observe loss $\ell_t$ and pay $\ell_t(\bx_t)$}
\STATE{Pass $\ell_t$ to $\mathcal{A}_1, \ldots, \mathcal{A}_t$}
\STATE{Set $g_{t,i} = \ell_t(\by_t^i)$ and pass $\bg_t$ to $\mathcal{B}$}
\ENDFOR
}
\end{algorithmic}
\end{algorithm}

In this section we sketch the strategy to get the optimal bound for the setting of Learning with Expert Advice (LEA) by combining the prediction of a strongly-adaptive algorithm and \cref{algo:greedy}, as suggested by \cref{thm:combiner}. In particular, we need to design a strongly-adaptive algorithm which satisfies the conditions of \cref{thm:strongly_adaptive_path_length}.

\paragraph{Strongly-adaptive algorithm.} The dominant approach in the design of strongly adaptive algorithms has been the following. Consider an anytime algorithm $\mathcal{A}$ with static regret bound of $\tilde{\mathcal{O}}(\sqrt{t})$ for the interval $[1, t]$. At each time-step $t$ initialize a new copy of $\mathcal{A}$. Then, to come up with a prediction at round $t$, use an expert algorithm $\mathcal{B}$ to combine the predictions of the $t$ existing base learners. The resulting strategy is depicted in \Cref{algo:strongly_adaptive}.
Let $\by_t^s$ the output at time $t$ of the algorithm initialized at time $s$, i.e., $\mathcal{A}_s$. The regret over an interval $I = [s,e]$ against any sequence $\bu_{s:e} $ can then be decomposed as follows
\begin{align}
\label{eq:strongly_adaptive_bound}
    \sum_{t=s}^e (\ell_t(\bx_t) - \ell_t(\bu_t)) &\leq \sum_{t=s}^e \sum_{i=1}^t p_{t,i} \ell_t(\by_t^i) - \sum_{t=1}^T \ell_t(\bu_t) \nonumber \\
    & = \underbrace{\sum_{t=s}^e ( \langle \bp_t, \bg_t \rangle - \ell_t(\by_t^s) )}_{\textup{Regret of } \mathcal{B}} + \underbrace{\sum_{t=s}^e (\ell_t(\by_t^s) - \ell_t(\bu_t))}_{\textup{Regret of } \mathcal{A}_s}~.
\end{align} 
where the first inequality derives from Jensen's inequality. Next, we analyze the two contributions separately.

\paragraph{Algorithm $\mathcal{B}$.} This is the regret of an expert algorithm against a fictitious adversary which always commits to the same choice $\by_t^s$. In order to have the desired regret bound of $\tilde{\mathcal{O}}(\sqrt{|I|})$ a regular expert algorithm with regret bound of $\tilde{\mathcal{O}}(\sqrt{T})$ does not work. Indeed, we need an expert algorithm which only pays for the timesteps when the base algorithm $\mathcal{A}_s$ has been active, i.e., for the interval $[s, e]$. It can be shown that a \emph{sleeping} experts algorithm suffices (details omitted). Hence, we adopt the algorithm \emph{Sleeping CB} from \citet{jun2017improved} as the expert algorithm $\mathcal{B}$ in \cref{algo:strongly_adaptive}, which gives
\begin{equation}
\label{eq:sleeping}
    \sum_{t=s}^e \langle \bp_t, \bg_t \rangle - \sum_{t=s}^e \ell_t(\by_t^s) \le \tilde{\mathcal{O}}\left(\sqrt{|I|}\right)~,
\end{equation}
for any interval $I=[s,e] \subseteq [1, T]$.

\paragraph{Algorithm $\mathcal{A}$.} Differently from \citet{jun2017improved} for the case of Learning with Expert Advice, we cannot use the \emph{Coin Betting} algorithm as base algorithm (i.e., $\mathcal{A}$ in \cref{algo:strongly_adaptive}), since it does not satisfy a regret bound as the one required by \cref{eq:cutkosky_condition_1} involving the path-length of the comparator sequence. Instead, we can adopt the algorithm from \cref{thm:lea}. Indeed, by using $\beta^2 = \ln T$ in \cref{thm:lea} we get the following regret bound against any sequence $\bu_{1:T}$ in the simplex
\begin{equation}
\label{eq:adaimplicit_experts}
    R_T(\bu_{1:T}) \le (2 + C_T) \sqrt{(1+\ln T)\sum_{t=1}^T \mathbb{E}[\bg_t^2]} + 2L_\infty d = \mathcal{O}\left(C_T\sqrt{T \ln T}\right)~.
\end{equation}

\paragraph{Regret bound.} For any interval $I \subseteq [1, T]$, a strongly-adaptive algorithm run with \emph{Coin Betting} as $\mathcal{B}$ and the algorithm from \cref{thm:lea} as base algorithm $\mathcal{A}$ would get the regret bound required in \cref{eq:cutkosky_condition_1}. Indeed, from the decomposition in \cref{eq:strongly_adaptive_bound} we have that for any interval $I = [s,e]$ and sequence $\bu_{s:e}$
\begin{align*}
    \sum_{t=s}^e (\ell_t(\bx_t) - \ell_t(\bu_t)) &\le \sum_{t=s}^e ( \langle \bp_t, \bg_t \rangle - \ell_t(\by_t^s) ) + \sum_{t=s}^e (\ell_t(\by_t^s) - \ell_t(\bu_t)) \\
    & \le \tilde{\mathcal{O}}\left(\sqrt{|I|}\right) + \mathcal{O}\left( C_{I} \sqrt{|I| \ln T} \right) \\
    &= \tilde{\mathcal{O}}\left(C_I\sqrt{|I|}\right)~,
\end{align*}
where the first inequality derives from the regret guarantee of the sleeping expert algorithm in \cref{eq:sleeping} and the one of the base algorithm in \cref{eq:adaimplicit_experts}.
We can therefore apply \cref{thm:strongly_adaptive_path_length} with $J= [1,T]$.
This immediately gives a dynamic regret bound of $\mathcal{O}\left(\sqrt{T C_T \ln T} \right)$ for the resulting strongly-adaptive algorithm against any sequence $\bu_{1:T}$. To conclude, we can combine the predictions of the strongly-adaptive algorithm and those from the greedy strategy with \cref{algo:prod} and apply \cref{thm:combiner} to get the desired result.

\paragraph{Running Time.} Note that the tecnique sketched in \cref{algo:strongly_adaptive} requires initializing a new algorithm in any step, which would lead to a total runtime of $\mathcal{O}(T^2)$. However, there are techniques to reduce this running time to $\mathcal{O}(T\ln T)$ such as \emph{Geometric Covering} intervals \citep{jun2017improved}.

\end{document}